\documentclass[letterpaper]{article} % DO NOT CHANGE THIS
\usepackage{aaai2027}  % DO NOT CHANGE THIS
\usepackage[hyphens]{url}  % DO NOT CHANGE THIS
\usepackage{graphicx} % DO NOT CHANGE THIS
\usepackage{natbib}  % DO NOT CHANGE THIS AND DO NOT ADD ANY OPTIONS TO IT
\usepackage{caption} % DO NOT CHANGE THIS AND DO NOT ADD ANY OPTIONS TO IT
\usepackage{algorithm}
\usepackage{algorithmic}

\usepackage{newfloat}
\usepackage{listings}
\DeclareCaptionStyle{ruled}{labelfont=normalfont,labelsep=colon,strut=off} % DO NOT CHANGE THIS
\floatstyle{ruled}
\newfloat{listing}{tb}{lst}{}
\floatname{listing}{Listing}

\usepackage{booktabs}

\usepackage{longtable}
\usepackage{tabularx}
\usepackage{booktabs}
\usepackage{algorithm}
\usepackage{algorithmic}
\usepackage{amsmath}
\usepackage{amsfonts}
\usepackage{xcolor}
\usepackage[table]{xcolor}
\usepackage{amsthm}

\usepackage{mdframed}
\usepackage{fvextra}

\title{GraphReview: Scientific Paper Evaluation via LLM-based \\ Graph Evidence Expansion}
\author{
	Pujun Zheng\textsuperscript{\rm 1},
	Wanying Ren\textsuperscript{\rm 1},
	Jiacheng Yao\textsuperscript{\rm 1},
	Guoxiu He\textsuperscript{\rm 1} \thanks{Corresponding author.},
	Star X. Zhao\textsuperscript{\rm 2}
}
\affiliations{
	\textsuperscript{\rm 1}School of Economics and Management, East China Normal University\\
	\textsuperscript{\rm 2}Institute of Big Data, Fudan University\\
	\{pjzheng, wyren, jcyao\}@stu.ecnu.edu.cn, gxhe@fem.ecnu.edu.cn
}

\usepackage{bibentry}
\begin{document}
	
\nocopyright
\maketitle

\begin{abstract}

% **Scientific paper evaluation** requires assessing a manuscript in relation to both contemporaneous research and prior literature. However, existing LLM-based methods typically model these signals in isolation and lack a unified mechanism for propagating review evidence across papers. We propose \textbf{**GraphReview**}, a graph-baszed LLM framework that formulates paper evaluation as review-signal message passing over a semantic paper graph. The graph jointly captures **intrinsic quality, synchronic links** among contemporaneous papers, and **diachronic links** to prior work. LLMs estimate node-level quality priors and generate edge-level comparative evidence through pairwise paper comparisons, while Personalized PageRank integrates these signals for quality ranking, decision prediction, and review generation. To improve the quality of graph evidence, we introduce reward-induced maximum likelihood objectives for training the LLM backbones. Experiments show that GraphReview consistently outperforms the strongest baseline, with average improvements of 29.7\% on decision and ranking metrics, including gains of 23.7\% in Accuracy and 57.6\% in Spearman's $\rho$. It also generates higher-quality review texts and generalizes effectively across time periods and conference venues. The code is available at https://anonymous.4open.science/r/anonymous-GraphReview.

Scientific paper evaluation often involves not only assessing a manuscript itself, but also relating it to contemporaneous research and prior literature. However, existing LLM-based methods typically model these signals separately and lack a unified mechanism for aggregating review evidence across papers. We propose \textbf{GraphReview}, a graph-based LLM framework that formulates paper evaluation as inference-time graph evidence expansion over a semantic paper graph. The graph jointly captures intrinsic quality, synchronic links among contemporaneous papers, and diachronic links to prior work. LLMs are used to estimate node-level quality priors and generate edge-level comparative evidence through pairwise paper comparisons, while Personalized PageRank integrates these structured signals for quality ranking, decision prediction, and review generation. To produce higher-quality graph evidence, we propose reward-induced maximum likelihood objectives for training the LLM backbones. Experiments show that GraphReview consistently outperforms the strongest baseline, achieving average improvements of 29.7\% on decision and ranking metrics, including gains of 23.7\% in Accuracy and 57.6\% in Spearman's $\rho$. It also produces higher-quality review texts and generalizes effectively across time periods and conference venues. %The code is available at https://anonymous.4open.science/r/anonymous-GraphReview. % \href{https://anonymous.4open.science/r/anonymous-GraphReview}{anonymous GitHub}.
\end{abstract}

% Uncomment the following to link to your code, datasets, an extended version or similar.
% You must keep this block between (not within) the abstract and the main body of the paper.
% \begin{links}
%     \link{Code}{https://aaai.org/example/code}
%     \link{Datasets}{https://aaai.org/example/datasets}
%     \link{Extended version}{https://aaai.org/example/extended-version}
% \end{links}

\section{Introduction}

% Pujun: 我希望强调，GraphReview 的架构核心是一种测试时扩展（inference-time expansion）机制，提供了一种结构化的方法用于拓展图信息。 LLM-based message passing mechanism 如何突破实现是 LLM 做 pairwise comparison + PPR 聚合的这一思考？
% Pujun：应明确任务是“给定整批投稿与已知容量的 ranking

% Guoxiu: Figure 1 没有在正文中refer
% Pujun: 突出 research gap 然后 ref
\begin{figure}[htbp]
	\setlength{\belowcaptionskip}{-5pt} 
	\setlength{\floatsep}{10pt} 
	\setlength{\textfloatsep}{10pt}
	\centering
	\includegraphics[width=\columnwidth]{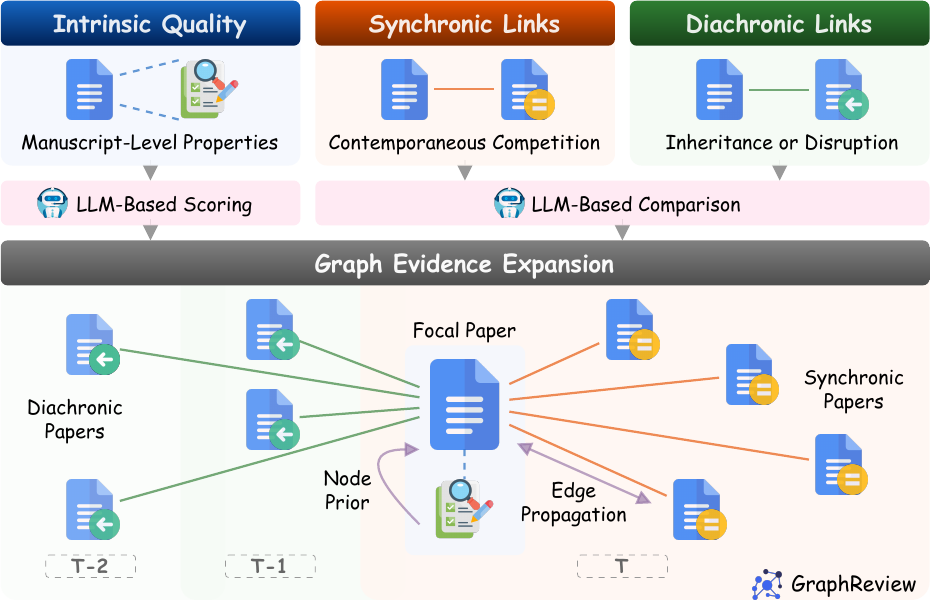}
	\caption{Previous LLM-based methods consider information sources in isolation (top), whereas GraphReview organizes multi-source review evidence in a semantic graph and aggregates it through inference-time expansion (bottom).}
	\label{fig:time_graph}
\end{figure}

Peer review plays a foundational role in ensuring publication quality \citep{alberts2008reviewing}. As the volume of scholarly literature grows rapidly \cite{he2023research}, leveraging artificial intelligence to assist paper reviewing has become an important strategy for addressing the reviewing crisis \citep{zhou2024llm, du2024llms, zhuang2025large, li2025can}. In particular, large language model (LLM)-based approaches have demonstrated remarkable capability and scalability in paper reviewing, and are increasingly adopted to support paper evaluations and help authors improve manuscript quality \cite{latona2024ai, thakkar2025can, thakkar2026large}.

Despite recent progress, existing LLM-based methods for paper evaluation remain limited by narrow modeling assumptions, as shown in Figure \ref{fig:time_graph} (top). The first line of work focuses on detailed analysis of textual content \cite{lu2024ai, jin2024agentreview, weng2025cycleresearcher, zhu2025deepreview, zeng2025reviewrl, chang2025treereview}, treating paper quality as an intrinsic property of the manuscript. The second line compares submissions within the same review cycle \cite{zhang2025from, zhao2025naipv2, zheng2026isolated}, defining quality in terms of contemporaneous competition. The third line evaluates new papers through their relationships to prior literature \cite{he2023h2cgl, xue2024predicting, zhao2025words}, thus grounding quality in a historical scholarly context while emphasizing inheritance, continuation, and disruption.

However, existing methods typically treat different sources of review evidence separately, without a unified mechanism for integrating signals across papers. From a science-of-science perspective, evaluating a paper requires not only assessing its writing quality and scientific merit, but also situating it within contemporaneous competition and the evolving landscape of domain knowledge \cite{fortunato2018science, wu2019large}. These signals are central to real-world peer review, especially in batch-style review scenarios where submissions are assessed comparatively, yet methods based on a single perspective capture only part of the picture.
% old: Existing LLM-based paper evaluation approaches, which mainly rely on end-to-end autoregressive generation or agent-based frameworks, further reinforce this limitation. Even when additional papers are incorporated, they are usually appended as unstructured context, rather than modeled as explicit comparative evidence that can be propagated across related papers. Consequently, these methods fail to capture and propagate comparative signals in the broader scientific context. This limitation motivates a graph-based framework that explicitly models context-aware, multi-source review signals and exploits their complementary strengths.
Existing LLM-based approaches to paper evaluation primarily rely on end-to-end autoregressive generation or agent-based frameworks, which further exacerbates this limitation. Even when additional papers are incorporated, they are typically appended as unstructured context rather than modeled as explicit comparative evidence that can be propagated across related work. As a result, these methods struggle to capture comparative signals from the broader scientific context. This limitation motivates a graph-based framework that uses the graph as a symbolic inference structure for inference-time expansion, enabling context-aware modeling of multi-source review signals and leveraging their complementarity.

% At the methodological level, GraphReview introduces graph learning into paper evaluation and leverages decoupled LLM backbones for message passing. Specifically, we construct a graph from semantic relations, use LLMs to assign each node a quality prior, and generate edge-level evidence through pairwise comparisons between connected nodes, with comparative signals propagated along edges. We then apply Personalized PageRank to combine global evidence while generating the review text in parallel, producing the final ranking, decision, and review.
% Pujun: 这段小改
% To address this, 
% paper quality
% To this end, we propose GraphReview, a graph-based LLM framework that jointly models multi-source review signals associated with paper quality. As illustrated in Figure \ref{fig:time_graph} (bottom), we represent paper evaluation within a graph structure and perform message passing over review evidence. % , thereby integrating three complementary sources of signals that have largely been studied in isolation
% old: To this end, we propose GraphReview, a graph-based LLM framework that represents paper evaluation as message passing over multi-source review evidence within a semantic paper graph.
To this end, we propose GraphReview, a graph-based LLM framework that represents paper evaluation as inference-time evidence expansion over multi-source review signals within a semantic paper graph.
% old: As illustrated in Figure \ref{fig:time_graph} (bottom), \textbf{Intrinsic Quality} captures manuscript-level properties such as originality, clarity, and significance, which provide the most direct signals for review; \textbf{Synchronic Links} model relations among contemporaneous submissions and capture comparative quality within the same review cycle, where strong papers are evaluated against competing work; \textbf{Diachronic Links} characterize intellectual linkages over time, capturing how a paper extends, diverges from, or overlaps with recent literature in ways that can either support or weaken its contributions. 
As illustrated in Figure \ref{fig:time_graph} (bottom), \textbf{Intrinsic Quality} captures manuscript-level properties such as originality, clarity, and significance; \textbf{Synchronic Links} model comparative relations among contemporaneous submissions; and \textbf{Diachronic Links} characterize how a paper extends, diverges from, or overlaps with recent literature.
 To operationalize this framework, GraphReview uses the graph to organize LLM-generated review signals rather than to learn a latent graph representation. The LLM first assigns each paper node a quality prior and then performs pairwise comparisons on selected edges to produce relational evidence. Personalized PageRank aggregates the pointwise priors and edge-level comparisons into a transductive global ranking and acceptance prediction under the target capacity, while review texts are generated in parallel from the same structured evidence. We further use reward-induced maximum likelihood as a stable soft-target objective to train the LLM signal generators.
 
% Jiacheng: intro部分背景部分我感觉写得很好，但是在介绍自己的方法的时候把top的三个部分又讲述了一遍（即Specifically, Intrinsic quality.....ways that can either support or weaken its contributions这部分），感觉这部分可以精简一点。
% Pujun: 很有道理但是，貌似有点难以调整的样子。不知道怎么办好。

% Guoxiu: 实证还是可以写的更细一些。毕竟这是Emnlp。
% 详细一些的实验描述，建议 *2 篇幅
% Overall, our method achieves leading performance with strong generalization across comparable datasets. It outperforms the strongest prior baselines by an average of 29.7\% across accuracy and ranking metrics, produces the highest-quality review text, and also demonstrates strong generalization across time periods and conference venues. 
We conduct comprehensive experiments from multiple perspectives to systematically evaluate the proposed framework. Results show that our method achieves leading overall performance, yielding an average relative gain of 29.7\% over the strongest baseline on decision and ranking metrics. Specifically, it improves Accuracy by 23.7\% and Spearman’s $\rho$ by 57.6\%. For review text quality, our method achieves win rates above 50\% against all baselines in technical depth, evidence grounding, scientific rigor, revision utility, and overall preference. Furthermore, cross-time-period and cross-venue generalization studies show that the model reliably distinguishes papers at different quality levels.
Our main contributions are as follows:

$\bullet$ We formulate paper evaluation in an integrated framework that captures intrinsic quality together with synchronic and diachronic relations.

% old: $\bullet$ We propose an LLM-based message passing mechanism tailored to paper evaluation, and design training objectives suitable for its LLM backbones.
$\bullet$ We propose an inference-time graph evidence expansion mechanism tailored to paper evaluation, and design training objectives suitable for its LLM backbones.

$\bullet$ Our method achieves leading performance on ranking and decision prediction, improves review text quality, and demonstrates robust generalization across time periods and conference venues.

\section{Related Work}

\subsection{Automated Scientific Paper Evaluation}

A major line of research evaluates the intrinsic quality of a paper by using LLM-based review agents to simulate peer review and produce automated pointwise scores \cite{jin2024agentreview,lu2024ai}. To improve review quality, many studies introduce iterative refinement or multi-round interaction \cite{weng2025cycleresearcher,tan2024peer}, domain-specific fine-tuning, reinforcement learning, and structured agent workflows \cite{zeng2025reviewrl,yu2024automated,tyser2024ai,garg2025revieweval,chang2025treereview,zhu2025deepreview,lu2025agent}.
Another line of work compares submissions within the same venue and adopts comparison-based evaluation. Representative directions include pairwise comparison for quality prediction \cite{zhao2025naipv2,hopner2025automatic} and preference aggregation via reinforcement learning with comparative rewards \cite{zhang2025from, zheng2026isolated, zheng2026navigating}. A further line of research considers scholarly inheritance, which is important for assessing a paper's actual contribution. This information has been explored in graph-based impact prediction \cite{he2023h2cgl,xue2024predicting}, but only a few LLM-based review systems incorporate paper retrieval or listwise ranking \cite{zhao2025words,zhu2025deepreview}.
Overall, most existing approaches rely on a single dominant perspective and do not combine intrinsic and relational signals within an integrated framework. Their autoregressive or agent-based architectures are also not well suited to explicitly organize multi-source relations, aggregate evaluative evidence, and support global decision making.

\subsection{LLM-Based Graph Methods}

Classic message-passing GNNs learn node representations and perform downstream predictions through neighborhood aggregation \cite{kipf2016semi,velivckovic2017graph,gilmer2017neural,hamilton2017inductive}. With the rapid development of LLMs, graph methods have become an increasingly important complementary component. Early work mainly converted graph structures into natural-language descriptions through hand-crafted rules \cite{wang2023can,zhao2023graphtext}. More recent studies pursue tighter alignment between graph structures and linguistic semantics, enabling LLMs to better understand graphs and generalize across reasoning tasks \cite{tang2024graphgpt,wang2025unigte,jing2026entropy,luo2025g,tao2025code,feng2025grapheval}. Another major direction uses graphs for retrieval-augmented generation, treating knowledge graphs as external memory and combining their structure with retrieval \cite{edge2024local,guo2024lightrag,gutierrez2024hipporag,dong2025youtu}. Other work models graphs as interactive environments for decision making \cite{zhang2023graph,finkelshtein2025actions}. However, existing LLM-based graph methods primarily use graph structures as context, external memory, or an interaction environment, rather than as a mechanism for organizing and aggregating quality evidence in evaluation tasks. As a result, they cannot explicitly represent high-level review signals, directional relationships, or evaluative evidence, making them less suitable for graph-based paper evaluation.

\section{Methodology}
\label{sec:methodology}

\subsection{Overview}
% ordered 和 ranking 放在一起也有点奇怪，应该是一个意思。
% Paper evaluation can be formulated as a sequence-to-ranking problem that maps a textual sequence to a global ranking. Specifically, given $n$ submitted papers for a conference and $(N-n)$ papers previously accepted at past conferences as an external knowledge base, the system takes $\mathbf{x}=[x_1,x_2,\cdots,x_n,x_{n+1},\cdots,x_N]$ as input and produces a quality-based ranking over the submitted papers, denoted by $\mathbf{r}=[r_1,r_2,\cdots,r_n]$.
% old: Scientific paper evaluation can be formulated as a sequence-to-ranking problem that maps a list of textual documents to a global quality ranking. Specifically, given $n$ concurrently submitted papers and an auxiliary corpus consisting of $(N-n)$ relevant historical papers, the system takes $\mathbf{x}=[x_1,x_2,\cdots,x_n,x_{n+1},\cdots,x_N]$ as input and produces a quality-based ranking of the submitted papers, denoted by $\mathbf{r}=[r_1,r_2,\cdots,r_n]$.
Scientific paper evaluation is formulated as a batch-style transductive ranking problem. Given the whole set of $n$ concurrently submitted papers, a known review capacity or acceptance rate $\gamma$, and an auxiliary corpus consisting of $(N-n)$ relevant historical papers, the system takes $\mathbf{x}=[x_1,x_2,\cdots,x_n,x_{n+1},\cdots,x_N]$ as input and produces a quality-based ranking of the submitted papers, denoted by $\mathbf{r}=[r_1,r_2,\cdots,r_n]$. The final decision selects the top $\lfloor\gamma n\rfloor$ papers from this batch, rather than evaluating each paper as an isolated standalone instance.
The overall process of our method is shown in Figure \ref{fig:process}.
% Guoxiu：这一段有点没必要。
% Pujun: 别人也有，还是保留了
%We first introduce our main LLM-based graph evidence expansion framework, then describe how we construct the graph and perform computation over it. We finally presents the training strategy for the LLM backbones. 

% Guoxiu：这一部分好大啊，感觉字号比正文都大了，可以把字体改小一点。
% \newcommand{\midsmall}{\fontsize{10pt}{12pt}\selectfont}
\begin{algorithm}[t]
    \small
	\caption{Inference-Time Graph Evidence Expansion} 
	\label{alg:message_passing}
	\begin{algorithmic}[1]
		\REQUIRE Minimum improvement threshold $\epsilon$ and maximum patience $c_{\max}$
		\ENSURE Best round $T^*$ and best ranking $\mathbf r^*$
		\STATE \textbf{Initialize} $\eta^* \gets 0$, $T^* \gets 1$, $\mathbf r^* \gets \emptyset$, $T \gets 1$, $c \gets 0$
		\WHILE{$c < c_{\max}$}
		\STATE Construct adjacency matrix $\mathbf{C}^{(T)}$ by $f_{\mathrm{S2FM}}$
		\FOR{each node $v$}
		\FOR{each $u$ satisfying $c_{uv}^{(T)}=1$}
		\STATE {\color{gray}\texttt{// Preference generation}}
		\STATE $g_{v \leftarrow u} = f_{\mathrm{LLM}}(x_u,x_v,p_{\mathrm c})$
		\ENDFOR
		\STATE {\color{gray}\texttt{// Evidence collection}}
		\STATE $\mathbf{g}_v=\bigoplus\limits_{c_{uv}^{(T)}=1} g_{v \leftarrow u}$
		\ENDFOR
		\STATE {\color{gray}\texttt{// Ranking aggregation}}
		\STATE $\boldsymbol{\pi}=f_{\mathrm{PPR}}\Big(\big\{f_{\mathrm{LLM}}(x_v,p_{\mathrm{s}})\big\},\big\{\mathbf{g}_v\big\}\Big)$
		\STATE $\mathbf r = \operatorname{argsort}(\boldsymbol{\pi})$
		\STATE Calculate the performance $\eta$ from $\mathbf{r}$
		\IF{$\eta - \eta^* > \epsilon$}
		\STATE $\eta^* \gets \eta$, \ $T^* \gets T$, \ $\mathbf r^* \gets \mathbf r, c \gets 0$
		\ELSE
		\STATE $\ c \gets c+1$
		\ENDIF
		\STATE $T \gets T + 1$
		\ENDWHILE
		\RETURN $T^*, \mathbf r^*$
	\end{algorithmic}
\end{algorithm}

\subsection{Graph Evidence Expansion}
\label{sec:message_passing}

From a graph perspective, a Transformer input sequence with global attention can be viewed as a fully connected graph \cite{joshi2025transformers}, and attention can be interpreted as a form of GNN computation \cite{frasca2025neural}. Although inspired by this analogy, our method differs from conventional GNNs: the graph acts as a symbolic inference structure rather than a learned latent-representation model. Specifically, we formulate paper evaluation as graph-structured evidence expansion, where task-specific pairwise judgments are produced directly by LLMs. As summarized in Algorithm~\ref{alg:message_passing}, inference consists of three message-passing-style stages: preference generation, evidence collection, and ranking aggregation.

We perform inference over a dynamically expanded graph that combines intrinsic quality, synchronic links, and diachronic links. Across rounds, S2FM activates semantic edges, the LLM generates edge-level pairwise evidence, and PPR aggregates node priors with relational evidence. Increasing $T$ thus expands structured inference-time evidence, rather than adding recurrent GNN-style state updates, leading to improved and eventually convergent performance.

\paragraph{Preference Generation}

The edge-evidence function characterizes how a neighboring paper provides comparative evidence for a target paper. Unlike conventional GNNs that propagate learned embeddings or recurrent node states, our method uses the LLM to perform pairwise comparisons over selected edges. Given the textual content $x_u$ and $x_v$ of nodes $u$ and $v$, we query the LLM $f_{\mathrm{LLM}}$ with a comparison prompt $p_{\mathrm c}$ to infer their relative quality. Each comparison returns two parts: a preference label, and a natural language rationale retained in $g_v$ for text consolidation. Graph connectivity is specified by a dynamically expanded adjacency matrix $\mathbf{C}^{(T)}$. Each entry $c^{(T)}_{uv}$ indicates whether the semantic link between nodes $u$ and $v$ is activated by the Sequential 2-Factor Matching (S2FM) algorithm, denoted by $f_{\mathrm{S2FM}}$. Since evidence is generated for both comparison directions, the adjacency matrix is symmetric, that is, $c^{(T)}_{uv} = c^{(T)}_{vu}$.

\paragraph{Evidence Collection}

The collection function gathers all edge-level evidence associated with a target node. Instead of compressing incoming signals with a pooling operator, we retain the full set of neighborhood evidence through concatenation. Let $\mathcal{N}(v)$ denote the neighbor set of node $v$. The aggregated evidence for node $v$ is formed by concatenating the comparison results from all neighbors in $\mathcal{N}(v)$.

\paragraph{Ranking Aggregation}

The ranking aggregation step combines pointwise and relational evidence to produce the final ranking signal. Rather than training an end-to-end graph predictor, we integrate LLM-based pointwise priors with relational evidence through Personalized PageRank (PPR)  \cite{jeh2003scaling}, denoted by $f_{\mathrm{PPR}}$, which yields a global ranking without training an entire graph model. For each node $v$, we query the LLM $f_{\mathrm{LLM}}$ with the prompt $p_{\mathrm s}$ to obtain a direct quality estimate for $x_v$. This score serves as a prior importance signal that captures standalone evidence for the paper. We then refine it with comparative evidence collected from neighboring nodes. Specifically, PPR diffuses node importance over the graph based on both the pointwise priors and the aggregated edge evidence. Finally, we apply $\mathrm{argsort}$ to the converged PPR scores $\boldsymbol{\pi}$ to obtain a total ranking, and predict the top $\lfloor \gamma n \rfloor$ papers as accepted under a preset acceptance rate $\gamma$, with the remaining papers rejected.

% Guoxiu：为啥没有正文ref这个figure？另外，没有一个图来整体的串联起3.2、3.3、3.4
% Pujun: 这个图放上去吃掉我7-8行
\begin{figure*}[htbp]
	\setlength{\belowcaptionskip}{-10pt} 
	\setlength{\floatsep}{10pt} 
	\setlength{\textfloatsep}{10pt}
	\centering
	\includegraphics[width=\textwidth]{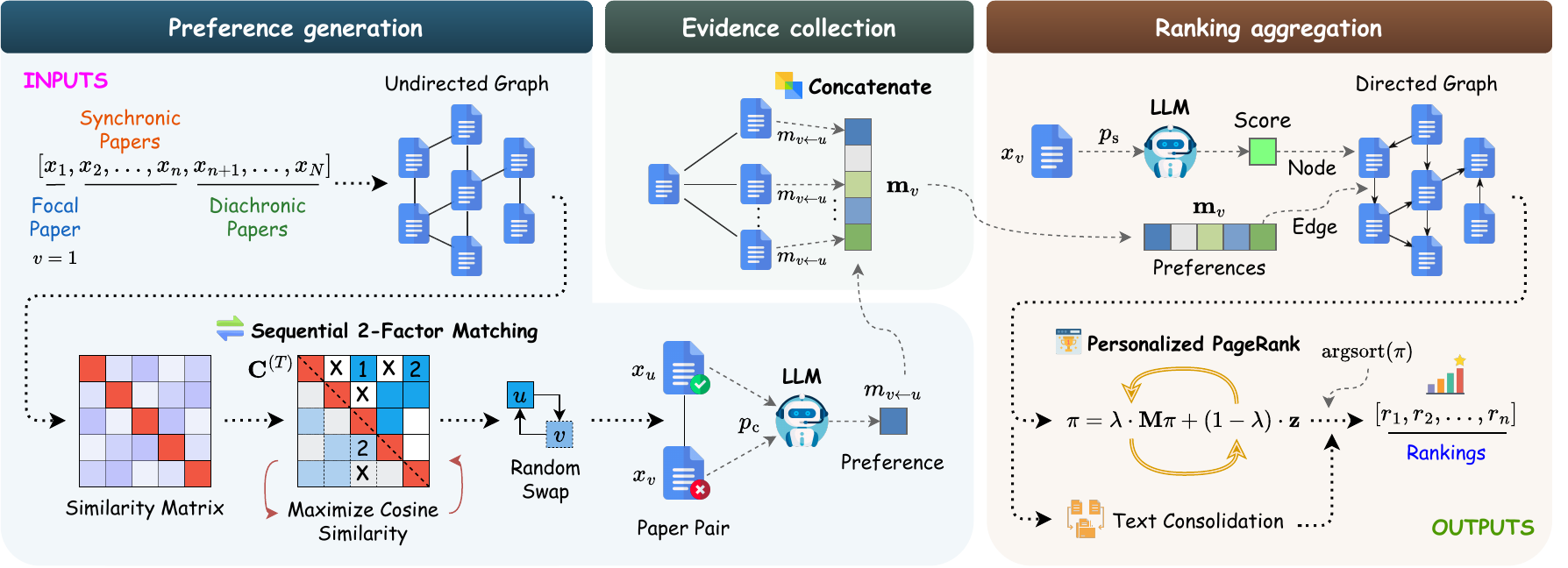}
	\caption{The overall process of our method. It includes preference generation (left), evidence collection (center), and ranking aggregation (right). The input is a list of papers, and the output is the ranking of the papers.}
	\label{fig:process}
\end{figure*}

\subsection{Graph Building and Computing}
\label{sec:graph_building_and_computing}

This section describes how we design $f_{\mathrm{S2FM}}$ to construct graphs and use $f_{\mathrm{PPR}}$ to perform computation on them.

\paragraph{Sequential 2-Factor Matching}

% 描述 S2FM 的情况下，添加三个优秀性质（直接写出名字即可，不用说明），参考自rebuttal的内容。

S2FM progressively expands the graph through repeated 2-factor matching steps, maintaining three desirable properties: scaling consistency, permutation equivariance, and guaranteed connectivity. Let $\mathbf{C}^{(t)}\in\mathbb{R}^{N\times N}$ denote the matching matrix at iteration $t$, where $c_{uv}^{(t)}$ indicates whether an edge exists between nodes $u$ and $v$. Each iteration adds $N$ bidirectional edges. At iteration $t$, S2FM solves a maximum-weight 2-factor matching problem, selecting a binary symmetric increment matrix $\Delta\mathbf{C}^{(t)}$ that maximizes the total semantic similarity $\sum_{u,v}\mathbf{h}_u^\top\mathbf{h}_v\Delta c_{uv}^{(t)}$ while enforcing that each node has degree 2. To avoid repeated edges across iterations, the constraint $\Delta c_{uv}^{(t)} \le 1-c_{uv}^{(t-1)}$ ensures that newly selected edges connect only previously unmatched node pairs, after which the adjacency matrix is updated as $\mathbf{C}^{(t)}=\mathbf{C}^{(t-1)}+\Delta\mathbf{C}^{(t)}$. After $T$ iterations, S2FM produces the final adjacency matrix $\mathbf{C}^{(T)}$.

\paragraph{Personalized PageRank}

We introduce PPR to unify node-level pointwise predictions and edge-level pairwise preference constraints produced by the aggregation function. PPR performs a weighted random walk between the pointwise prior and local transition probabilities, yielding a global ranking score. For node-level evaluation, we first construct a normalized node prior $\mathbf{z}=[z_1,z_2,\cdots,z_N]$ from pointwise prediction scores $\mathbf{e}=[e_1,e_2,\cdots,e_N]$. We use a small constant $\epsilon$ to handle potentially non-positive scores or missing values, thereby ensuring a valid probability distribution:
\begin{equation}
	z_u=\frac{\max(e_u,\epsilon)}{\sum_{k=1}^N \max(e_k,\epsilon)}
\end{equation}

For edge-level comparisons, we initialize a graph with the bidirectional adjacency matrix $\mathbf{C} \gets \mathbf{C}^{(T)}$. For each edge $(x_u, x_v)$, if $f_{\mathrm{LLM}}$ judges that $x_u$ strictly outperforms $x_v$, we remove the directed edge from $u$ to $v$ by setting:
\begin{equation}
	c_{uv}\gets 0,\quad \text{if } x_u\succ x_v
\end{equation}

The resulting adjacency matrix $\mathbf{C}$ represents a directed graph. Let $\mathbf{D}\in\mathbb{R}^{N\times N}$ be the diagonal out-degree matrix with $d_{vv}=\sum_{k=1}^N c_{vk}$. To construct a valid Markov chain, we define a column-normalized transition matrix $\mathbf{M}\in\mathbb{R}^{N\times N}$. For nodes with zero out-degree, we redirect according to the prior $\mathbf{z}$. Define each element $m_{uv}$ as:
\begin{equation}
	{m}_{uv}=
	\begin{cases}
		\frac{c_{vu}}{d_{vv}} & \text{if } d_{vv}>0 \\
		z_u & \text{if } d_{vv}=0
	\end{cases}
\end{equation}

The final aggregated score vector $\boldsymbol{\pi}\in\mathbb{R}^N$ is the stationary distribution of this random walk, combining local pairwise transitions $\mathbf{M}$ with the global pointwise prior $\mathbf{z}$ via:
\begin{equation}
	\boldsymbol{\pi}=\lambda\cdot \mathbf{M}\boldsymbol{\pi}+(1-\lambda)\cdot \mathbf{z}
\end{equation}

Here, $\lambda\in(0,1)$ is the damping factor. We approximate $\boldsymbol{\pi}$ via power iteration.

In addition, since PPR cannot directly propagate or update textual rationales, we run a parallel text consolidation pipeline that merges node and edge explanations alongside PPR’s aggregation of numerical signals.

\subsection{LLM Backbones}
\label{sec:llm_backbones} 

This section describes the training method for the LLM backbones used to generate node-level priors and edge-level comparative evidence, denoted by $f_\mathrm{LLM}$.

\paragraph{Cold Start and Prompt Optimization}
To encourage the model to learn not only superficial response patterns but also the underlying evaluation criteria, we first perform cold-start supervised fine-tuning. Paper evaluation, however, is a specialized and cognitively demanding task, and even human experts may struggle to articulate criteria that are both effective and comprehensive. To construct prompts that cover the major dimensions of paper assessment, we adopt a simple self-optimization procedure that iteratively refines the prompt, and then use the optimized prompt to generate training responses. Training on these cold-start data enables the model not only to produce a special review-signal token representing the evaluation outcome, but also to explain the evaluation dimensions and key considerations that lead to the same conclusion.

\paragraph{Reward-Induced Maximum Likelihood}
% Pujun: Figure 5 要引用

% old: Many existing approaches apply reinforcement learning with verifiable rewards and rollout sampling after the cold-start stage. However, these methods are not well suited for paper evaluation. Unlike tasks such as logical or mathematical reasoning, ground truth in paper evaluation comes from expert scores, where judgments are driven more by holistic preference than by long-horizon sequential decision making. As a result, these methods are computationally expensive and often produce sparse and noisy preference signals, resulting in unstable optimization. To address this, we propose Reward-Induced Maximum Likelihood (RIML), which is inspired by reinforcement learning methods that define rewards based on the discrepancy between the current prediction and the ground truth. Rather than optimizing these rewards directly, RIML converts them into target distributions over the review-signal token classes and trains the cold-started model with a maximum likelihood objective. Specifically, we define the objectives as follows.
After the cold-start stage, we use Reward-Induced Maximum Likelihood (RIML) as a simple and stable soft-target objective for aligning the LLM signal generators. Unlike rollout-based reinforcement learning, RIML does not require a costly reward pipeline or long-horizon exploration. It converts expert score rewards and pairwise preferences into target distributions over review-signal token classes, and then trains the cold-started model with maximum likelihood. This formulation better matches paper evaluation, where supervision comes from human rating distributions and comparative judgments rather than verifiable multi-step rewards. Specifically, the objectives are defined as follows.

% Pujun：把这两条公式更换成后面的 Objectives，在写法上作做重整

% old: At the node level, the model predicts a distribution over a set of discrete score anchors that approximate the continuous score space. Supervision is given by a reward-induced soft target distribution over anchors, which preserves score proximity and provides smoother learning signals than hard assignment. The node-level objective is:
% old: \begin{equation}
% old: 	\mathcal{L}_{\mathrm{node}}(\theta)
% old: 	=
% old: 	-\frac{1}{N}\sum_{v=1}^{N}\sum_{k=1}^{K} w_v\, y_{vk}\log P(k\mid q_v)
% old: \end{equation}
% old: Here, $y_{vk}$ denotes the reward-induced target probability for anchor $a_k$ for sample $v$, and $P(k\mid q_v)$ is the model prediction over classes. We fix the sample weight $w_v=1$ for all node samples, and $N$ is the total number of node samples.
At the node level, the model predicts a distribution over discrete score anchors $\{a_k\}_{k=1}^{K}$. For a paper $v$ with score $s_v$, we define:
\begin{equation}
\begin{aligned}
	P(k\mid q_v)&=\frac{\exp(z_{vk})}{\sum_{l=1}^{K}\exp(z_{vl})},\\
	y_{vk}&=\frac{\exp(r_{vk})}{\sum_{l=1}^{K}\exp(r_{vl})},\quad r_{vk}=-\frac{(s_v-a_k)^2}{2\sigma^2} \\
	\mathcal{L}_{\mathrm{node}}(\theta)
	&=
	-\frac{1}{N}\sum_{v=1}^{N}\sum_{k=1}^{K} w_v y_{vk}\log P(k\mid q_v)
\end{aligned}
\end{equation}
Here, the distance-based reward induces a soft target over score anchors, preserving proximity in the continuous score space; we set $w_v=1$ for all node samples.

% old: At the edge level, we model relative quality by constructing binary comparisons between samples with sufficiently different ground-truth scores. Each valid pair is assigned a one-hot label according to the score order, and is weighted by the score gap to emphasize more informative comparisons. The edge-level objective is:
% old: \begin{equation}
% old: 	\mathcal{L}_{\mathrm{edge}}(\theta)
% old: 	=
% old: 	-\frac{1}{M}\sum_{m=1}^{M}\sum_{k=0}^{1} w_{uv}\, y_{uvk}\log P(k\mid q_{uv})
% old: \end{equation}
% old: Here, $y_{uvk}$ refers to the one-hot target distribution induced by the pairwise comparison label, and $P(k\mid q_{uv})$ is the prediction over classes. $w_{uv}=|s_u-s_v|$ is the weight determined by the ground-truth score gap, and each pair $(u,v)$ corresponds to an edge index $m$.
At the edge level, for a pair $(u,v)$ with sufficiently different ground-truth scores, we define:
\begin{equation}
\begin{aligned}
	P(k\mid q_{uv})&=\frac{\exp(z_{uvk})}{\exp(z_{uv0})+\exp(z_{uv1})},\\
	y_{uvk}&=r_{uvk}, \quad r_{uvk}=\mathbb{I}\big[k=\mathbb{I}[s_u-s_v>0]\big],\\
	\mathcal{L}_{\mathrm{edge}}(\theta)
	&=
	-\frac{1}{M}\sum_{m=1}^{M}\sum_{k=0}^{1} w_{uv}y_{uvk}\log P(k\mid q_{uv})
\end{aligned}
\end{equation}
Here, $w_{uv}=|s_u-s_v|$ weights clearer pairwise preferences more strongly, and each valid pair corresponds to one edge index $m$.

\section{Experiments}
\label{sec:exp}

\begin{table*}[t]
	\renewcommand{\arraystretch}{0.82}
	\setlength{\belowcaptionskip}{-8pt}
	\setlength{\floatsep}{10pt}
	\setlength{\textfloatsep}{10pt}
	\setlength{\aboverulesep}{1.5pt}
	\setlength{\belowrulesep}{1.5pt}
	\centering
	\scriptsize
	\setlength{\tabcolsep}{3pt}
	\begin{tabularx}{\textwidth}{@{}
			>{\raggedright\arraybackslash}p{0.16\linewidth}
			*{5}{>{\centering\arraybackslash}X}
			*{7}{>{\centering\arraybackslash}X}@{}}
		
		\toprule
		
		& \multicolumn{5}{c}{\textbf{Method Characteristics}}
		& \multicolumn{3}{c}{\textbf{Decision Performance}}
		& \multicolumn{3}{c}{\textbf{Ranking Performance}}
		& \\
		\cmidrule(lr){2-6}
		\cmidrule(lr){7-9}
		\cmidrule(lr){10-12}
		\textbf{Method}
		& \textbf{Intrinsic Quality}
		& \textbf{Syn-chronic Links}
		& \textbf{Dia-chronic Links}
		& \textbf{Review Text}
		& \textbf{Data Leakage Risk}
		& \textbf{Accuracy}
		& \textbf{F1}
		& \textbf{AUC}
		& \textbf{Spearman $\rho$}
		& \textbf{Kendall $\tau$}
		& \textbf{NDCG @10}
		& \textbf{Avg. Performance} \\
		\specialrule{\lightrulewidth}{0.5pt}{0.5pt}
		
		\rowcolor{gray!10}
		\multicolumn{13}{@{}l}{\textbf{Classic GNNs}} \\
		GCN & $\checkmark$ & $\checkmark$ & $\checkmark$ &  & $\circ$
		& 0.5980 & 0.5404 & 0.5720 & 0.1267 & 0.0852 & 0.5552 & 0.4129 \\
		GAT & $\checkmark$ & $\checkmark$ & $\checkmark$ &  & $\circ$
		& 0.5760 & 0.5235 & 0.5104 & 0.0485 & 0.0349 & 0.5734 & 0.3778 \\
		GraphSAGE & $\checkmark$ & $\checkmark$ & $\checkmark$ &  & $\circ$
		& 0.5640 & 0.5143 & 0.5055 & 0.0492 & 0.0348 & 0.5868 & 0.3758 \\
		\specialrule{\lightrulewidth}{0.5pt}{0.5pt}
		
		\rowcolor{gray!10}
		\multicolumn{13}{@{}l}{\textbf{ LLMs}} \\
		GPT-5-Mini & $\checkmark$ &  &  & $\checkmark$ & $\bullet$
		& 0.7100 & 0.6605 & 0.7104 & 0.4136 & \underline{0.3317} & 0.6982 & 0.5874 \\
		Gemini-2.5-Flash & $\checkmark$ &  &  & $\checkmark$ & $\bullet$
		& 0.6060 & 0.5387 & 0.5557 & 0.1936 & 0.1569 & 0.6422 & 0.4488 \\
		Deepseek-V3.2 & $\checkmark$ &  &  & $\checkmark$ & $\bullet$
		& 0.6180 & 0.5527 & 0.5975 & 0.3536 & 0.2908 & 0.6614 & 0.5123 \\
		\specialrule{\lightrulewidth}{0.5pt}{0.5pt}
		
		\rowcolor{gray!10}
		\multicolumn{13}{@{}l}{\textbf{Review Agents}} \\
		AgentReview & $\checkmark$ &  &  & $\checkmark$ & $\circ$
		& 0.5160 & 0.5074 & 0.5528 & 0.0042 & 0.0026 & 0.6411 & 0.3707 \\
		AIScientist & $\checkmark$ &  &  & $\checkmark$ & $\circ$
		& 0.7100 & 0.5417 & 0.6418 & 0.3162 & 0.2508 & 0.7493 & 0.5350 \\
		SEA-7B & $\checkmark$ &  &  & $\checkmark$ & $\circ$
		& 0.6960 & 0.4104 & 0.5459 & 0.0983 & 0.0754 & 0.6585 & 0.4141 \\
		CycleReviewer-7B & $\checkmark$ &  &  & $\checkmark$ & $\circ$
		& 0.6620 & 0.5300 & 0.6766 & 0.3132 & 0.2255 & 0.6859 & 0.5155 \\
		DeepReview-7B & $\checkmark$ &  & $\triangle$ & $\checkmark$ & $\circ$
		& 0.6540 & 0.5470 & 0.5890 & 0.2928 & 0.2110 & 0.6938 & 0.4979 \\
		DeepReview-14B & $\checkmark$ &  & $\triangle$ & $\checkmark$ & $\circ$
		& 0.6860 & 0.6240 & 0.6494 & 0.3995 & 0.2991 & 0.6657 & 0.5540 \\
		\specialrule{\lightrulewidth}{0.5pt}{0.5pt}
		
		\rowcolor{gray!10}
		\multicolumn{13}{@{}l}{\textbf{Comparative Systems}} \\
		PairReview &  & $\checkmark$ &  & $\checkmark$ & $\circ$
		& 0.6700 & 0.5701 & 0.6047 & 0.2585 & 0.1781 & 0.6644 & 0.4910 \\
		CNPE-7B &  & $\checkmark$ &  & $\checkmark$ & $\circ$
		& 0.7200 & 0.6692 & 0.7363 & 0.3995 & 0.2774 & \underline{0.8040} & 0.6010 \\
		NAIP-8B &  &  & $\checkmark$ &  & $\circ$
		& 0.6140 & 0.5413 & 0.5641 & 0.1585 & 0.1074 & 0.6882 & 0.4456 \\
		NAIPv2-8B &  & $\checkmark$ &  &  & $\circ$
		& \underline{0.7260} & \underline{0.6780} & \underline{0.7627}
		& \underline{0.4205} & 0.2928 & 0.7723 & \underline{0.6087} \\
		\specialrule{\lightrulewidth}{0.5pt}{0.5pt}
		
		\rowcolor{gray!10}
		\multicolumn{13}{@{}l}{\textbf{Ours}} \\
		GraphReview & $\checkmark$ & $\checkmark$ & $\checkmark$ & $\checkmark$ & $\circ$
		& \textbf{0.8980} & \textbf{0.8806} & \textbf{0.9590}
		& \textbf{0.6626} & \textbf{0.4808} & \textbf{0.8547}
		& \textbf{0.7893} \\
		\bottomrule
	\end{tabularx}
	\caption{Comparison of method characteristics and performance.
		$\checkmark$ indicates the use of a corresponding information source or
		the availability of a functionality; $\triangle$ denotes a feature that has
		been developed but is not enabled by default in the released evaluation
		setting; and $\bullet$/$\circ$ indicate high/low risks of data leakage. % $^{*}$ indicates a statistically significant difference from the second-best result (Bootstrap $p < 0.001$). 
		\textbf{Best} and \underline{second-best} results are highlighted,
		respectively.}
	\label{tab:main}
\end{table*}

\subsection{Experimental Setup}

% Guoxiu：有多少数据还是要在正文提一下。
\paragraph{Dataset Construction}
All data are collected via the official OpenReview API at \url{https://openreview.net}. We construct the primary training and test sets from ICLR 2025 submissions, using the average human reviewer score as the ground-truth label. The external auxiliary knowledge base consists of all accepted papers from ICLR, ICML, and NeurIPS in 2023 and 2024, covering recent advances and the research frontier of the field. For each batch, we create a graph with $N$ 1.5k nodes for $n$ 0.5k submissions. 
This takes about 4 hours on two RTX Pro 6000 GPUs.
To assess generalization across time periods and venues, we further sample papers from ICLR 2026 and ICML 2025. These data include expert-provided paper group annotations and span multiple quality levels.

\paragraph{Training Setup and Hyperparameters}
We use the earlier-released Qwen2.5-7B-Instruct \cite{qwen2025qwen25technicalreport} as the backbone model to reduce potential data leakage, and adopt LoRA \cite{hu2022lora} for efficient training. The acceptance rate $\gamma$ is fixed at 31.4\%, which is the average acceptance rate for ICLR 2023 and 2024.

% Pujun: 防止泄露
\paragraph{Baselines}
We evaluate the following categories of baselines: \textbf{Classic GNNs}, including traditional GNN-based evaluation methods such as GCN~\cite{kipf2016semi}, GAT~\cite{velivckovic2017graph}, and GraphSAGE~\cite{hamilton2017inductive}.
\textbf{ LLMs}, including direct prompting approaches that ask LLMs from multiple providers to evaluate papers, with the main results reported for earlier-released models such as GPT-5-Mini, Gemini-2.5-Flash, and DeepSeek-V3.2 to reduce the risk of data leakage.
\textbf{Review Agents}, including agent-based evaluation systems such as AIScientist~\cite{lu2024ai}, AgentReview~\cite{jin2024agentreview}, SEA-7B~\cite{yu2024automated}, CycleReviewer-7B~\cite{weng2025cycleresearcher}, DeepReview-7B, and DeepReview-14B~\cite{zhu2025deepreview}.
\textbf{Comparative Systems}, including comparison-based evaluation methods such as PairReview~\cite{zhang2025from}, CNPE-7B~\cite{zheng2026isolated}, NAIP-8B~\cite{zhao2025words}, and NAIPv2-8B~\cite{zhao2025naipv2}.

\paragraph{Evaluation Metrics}
% Pujun: 无需介绍
% At the qualitative level, we examine whether each method covers the three key information sources emphasized in this work: Intrinsic Quality, Synchronic Links, and Diachronic Links. We also assess whether a method can generate evaluation text that is useful to reviewers and researchers. In addition, we consider the risk of data leakage, since LLMs trained on data collected after conference publication may already have been exposed to information about the papers.

% At the quantitative level, 
We use two groups of metrics to evaluate decision accuracy and ranking quality. The first formulates paper evaluation as a binary classification task that predicts whether a paper should be accepted, and we report Accuracy, F1, and AUC. The second measures the ability to rank high-quality papers ahead of low-quality ones, using Spearman's $\rho$, Kendall's $\tau$, and NDCG@10. We calculate 95\% confidence intervals and the statistical significance of differences using the bootstrap method.

\subsection{Main Results}

As shown in Table~\ref{tab:main}, classical GNN-based methods perform poorly overall. Although they are effective at modeling multi-source information with graph structures, they cannot generate textual feedback, which limits their practical value for reviewers and authors. More importantly, their architectures are not well suited to capturing the fine-grained semantics required for paper evaluation. In contrast, naïve LLMs achieve relatively strong performance on several metrics, consistent with their direct scoring paradigm that primarily relies on internal signals such as Intrinsic Quality. However, their reliability is undermined by the substantial risk of data leakage. The other two categories, review agents and comparison-based systems, are more competitive. Among review agents, DeepReview-14B trained with reinforcement learning performs best, achieving an average score of 0.5540 ($CI = [0.4950, 0.6136]$). Comparison-based methods place greater emphasis on modeling relational dependencies among contemporary papers and across time. NAIPv2 provides the strongest baseline performance, with an average score of 0.6087 ($CI = [0.5528, 0.6673]$).

Our GraphReview framework achieves an average score of 0.7893 ($CI = [0.7476, 0.8306]$) and consistently outperforms all baselines on both decision and ranking metrics. Compared with the strongest baseline, NAIPv2, our method achieves an average relative improvement of 29.7\% ($p<0.001$). On the core metrics, Accuracy and Spearman's $\rho$, the relative gains reach 23.7\% and 57.6\% (both $p<0.001$), respectively. These results demonstrate that our model is substantially more capable of both distinguishing accepted papers from rejected ones and predicting their relative ranking. Overall, the results validate the effectiveness of our framework, which is the only system that jointly integrates all information sources relevant to paper evaluation while also generating an advisory review for each paper.
% Our framework consistently outperforms all baselines on both decision and ranking metrics. Compared with the strongest baseline, NAIPv2, our method achieves an average relative improvement of 29.7\%. On the core metrics, Accuracy and Spearman's $\rho$, the relative gains reach 23.7\% and 57.6\%, respectively. Bootstrap confidence intervals on representative methods further indicate that the advantage is robust: GraphReview's overall performance interval is $[0.748,0.831]$, clearly separated from NAIPv2-8B's $[0.553,0.667]$. These results demonstrate that our model is substantially more capable of both distinguishing accepted papers from rejected ones and predicting their relative ranking. Overall, the results validate the effectiveness of our proposed framework, which is the only system that jointly integrates all information sources relevant to paper evaluation while also generating an advisory review for each paper.

\subsection{Ablation Study}

\begin{table}[t]
    \renewcommand{\arraystretch}{0.82}
	\setlength{\floatsep}{10pt} 
	\setlength{\textfloatsep}{10pt}
	\setlength{\aboverulesep}{1.5pt}
	\setlength{\belowrulesep}{1.5pt}
	\centering
	\scriptsize
	\setlength{\tabcolsep}{4pt}
	\begin{tabularx}{\columnwidth}{@{}
			>{\raggedright\arraybackslash}p{0.36\columnwidth}
			>{\centering\arraybackslash}X
			>{\centering\arraybackslash}X
			>{\centering\arraybackslash}X@{}}
		\toprule
		\textbf{Variant} & \textbf{Avg. Decision} & \textbf{Avg. Ranking} & \textbf{Avg. Performance} \\
		\specialrule{\lightrulewidth}{0.5pt}{0.5pt}
		\rowcolor{gray!10}
		\multicolumn{4}{@{}l}{\textbf{Information Sources}} \\
		\textcolor{red}{w/o } Graph  & 0.7640 & 0.5836 & 0.6738 \\
		\textcolor{red}{w/o } Intrinsic Quality & 0.8978 & 0.6127 & 0.7553 \\
		\textcolor{red}{w/o } Synchronic Links  & 0.8918 & \underline{0.6657} & \underline{0.7787} \\
		\textcolor{red}{w/o } Diachronic Links  & \underline{0.8979} & 0.6339 & 0.7659 \\
		\specialrule{\lightrulewidth}{0.5pt}{0.5pt}
		\rowcolor{gray!10}
		\multicolumn{4}{@{}l}{\textbf{Training Strategies}} \\
		\textcolor{red}{w/o } SFT  & \underline{0.8606} & \underline{0.6067} & \underline{0.7337} \\
		\textcolor{red}{w/o } RIML & 0.8604 & 0.5903 & 0.7254 \\
		\specialrule{\lightrulewidth}{0.5pt}{0.5pt}
		\rowcolor{gray!10}
		\multicolumn{4}{@{}l}{\textbf{Graph Construction}} \\
		Random Connection &  \underline{0.8731} &  \underline{0.6389} &  \underline{0.7560} \\
		\specialrule{\lightrulewidth}{0.5pt}{0.5pt}
		\rowcolor{gray!10}
		\multicolumn{4}{@{}l}{\textbf{Aggregation Alternatives}} \\
		Naïve Win-Rate Averaging & 0.8980 & 0.6461 & 0.7721 \\
		Bradley-Terry & \underline{0.9030} & \underline{0.6506} & \underline{0.7768} \\
		Borda Count & 0.8585 & 0.6357 & 0.7471 \\
		\specialrule{\lightrulewidth}{0.5pt}{0.5pt}
		\rowcolor{gray!10}
		\multicolumn{4}{@{}l}{\textbf{Full Model}} \\
		GraphReview      & \textbf{0.9125} & \textbf{0.6660} & \textbf{0.7893} \\
		\bottomrule
	\end{tabularx}
	\caption{Ablation study results of different information sources and training strategies. The full model always achieves the \textbf{best} results, and the \underline{second-best} results in each group are underlined.}
	\label{tab:ablation_all}
\end{table}

We conduct comprehensive ablation studies to assess the effects of information fusion, training strategies, graph construction, and aggregation methods. The results are summarized in Table~\ref{tab:ablation_all}.

We first assess the contribution of the three information sources. Excluding any source from graph construction reduces average performance. In particular, the full model outperforms the baseline that removes the graph structure and relies solely on point priors by 17.1\%, confirming the critical role of graph-based modeling.  Moreover, removing Intrinsic Quality, Diachronic Links, or Synchronic Links causes relative decreases of 4.3\%, 3.0\%, and 1.3\%, respectively. These results demonstrate that multi-source fusion effectively captures complementary signals of paper quality.

We next examine the training strategies. Removing the SFT stage reduces average performance by 7.0\%, while removing RIML leads to an 8.1\% decrease. This suggests that SFT introduces valuable review knowledge, whereas the soft-target objective of RIML further improves model alignment. The benefits of the two stages are complementary.

We then investigate the graph construction strategy. Under the same experimental setting, S2FM achieves 4.4\% higher average performance than random connectivity, indicating that semantically selected neighbors provide a more informative graph structure.

Finally, we compare PPR with alternative aggregation methods. PPR outperforms naïve win-rate averaging, Bradley-Terry modeling, and Borda count method. Overall, the full GraphReview model consistently achieves the best performance, confirming the effectiveness of its designs.

\subsection{Review Text Quality}

\begin{table}[t]
	\renewcommand{\arraystretch}{0.82}
	\setlength{\belowcaptionskip}{-5pt} 
	\setlength{\floatsep}{10pt} 
	\setlength{\textfloatsep}{10pt}
	\setlength{\aboverulesep}{1.5pt}
	\setlength{\belowrulesep}{1.5pt}
	\centering
	\scriptsize
	\setlength{\tabcolsep}{2.5pt}
	\begin{tabularx}{\columnwidth}{@{}
			>{\raggedright\arraybackslash}p{0.24\columnwidth}
			>{\centering\arraybackslash}X
			>{\centering\arraybackslash}X
			>{\centering\arraybackslash}X
			>{\centering\arraybackslash}X
			>{\centering\arraybackslash}X@{}}
		\toprule
		\textbf{\textcolor{gray}{vs} Baseline}
		& \textbf{Technical Depth}
		& \textbf{Evidence Grounding}
		& \textbf{Scientific Rigor}
		& \textbf{Revision Utility}
		& \textbf{Overall Preference} \\
		\specialrule{\lightrulewidth}{0.5pt}{0.5pt}
	
		\rowcolor{gray!10}
		\multicolumn{6}{@{}l}{\textbf{GraphReview }(Gemini-2.5-Flash) \textbf{vs Naïve LLM} (Gemini)} \\
		Gemini-2.5-Flash
		& 80.0\textcolor{gray!50}{/}\textcolor{gray}{19.5}
		& 63.5\textcolor{gray!50}{/}\textcolor{gray}{35.0}
		& 89.0\textcolor{gray!50}{/}\textcolor{gray}{9.5}
		& 93.5\textcolor{gray!50}{/}\textcolor{gray}{6.5}
		& 88.5\textcolor{gray!50}{/}\textcolor{gray}{11.5} \\
		Gemini-3-Flash
		& 70.0\textcolor{gray!50}{/}\textcolor{gray}{30.0}
		& 58.0\textcolor{gray!50}{/}\textcolor{gray}{41.0}
		& 82.0\textcolor{gray!50}{/}\textcolor{gray}{17.5}
		& 91.0\textcolor{gray!50}{/}\textcolor{gray}{8.0}
		& 82.0\textcolor{gray!50}{/}\textcolor{gray}{18.0} \\
		
		\specialrule{\lightrulewidth}{0.5pt}{0.5pt}
		
		\rowcolor{gray!10}
		\multicolumn{6}{@{}l}{\textbf{GraphReview }(DeepSeek-V3.2) \textbf{vs Naïve LLM} (DeepSeek)} \\
		DeepSeek-V3.2
		& 76.0\textcolor{gray!50}{/}\textcolor{gray}{24.0}
		& 63.0\textcolor{gray!50}{/}\textcolor{gray}{35.0}
		& 83.5\textcolor{gray!50}{/}\textcolor{gray}{14.0}
		& 95.5\textcolor{gray!50}{/}\textcolor{gray}{3.0}
		& 84.5\textcolor{gray!50}{/}\textcolor{gray}{15.5} \\
		DeepSeek-V4-Pro
		& 86.0\textcolor{gray!50}{/}\textcolor{gray}{13.0}
		& 78.0\textcolor{gray!50}{/}\textcolor{gray}{21.0}
		& 85.5\textcolor{gray!50}{/}\textcolor{gray}{13.5}
		& 95.0\textcolor{gray!50}{/}\textcolor{gray}{4.5}
		& 85.5\textcolor{gray!50}{/}\textcolor{gray}{14.0} \\
		
		\specialrule{\lightrulewidth}{0.5pt}{0.5pt}
		\rowcolor{gray!10}
		\multicolumn{6}{@{}l}{\textbf{GraphReview }(DeepSeek-V3.2) \textbf{vs Other Baselines}} \\
		DeepReview-7B
		& 92.5\textcolor{gray!50}{/}\textcolor{gray}{7.5}
		& 95.0\textcolor{gray!50}{/}\textcolor{gray}{5.0}
		& 97.0\textcolor{gray!50}{/}\textcolor{gray}{3.0}
		& 99.0\textcolor{gray!50}{/}\textcolor{gray}{1.0}
		& 97.5\textcolor{gray!50}{/}\textcolor{gray}{2.5} \\
		DeepReview-14B
		& 63.5\textcolor{gray!50}{/}\textcolor{gray}{36.5}
		& 66.5\textcolor{gray!50}{/}\textcolor{gray}{33.0}
		& 76.5\textcolor{gray!50}{/}\textcolor{gray}{23.5}
		& 63.0\textcolor{gray!50}{/}\textcolor{gray}{35.0}
		& 69.0\textcolor{gray!50}{/}\textcolor{gray}{31.0} \\
		CNPE-7B
		& 91.0\textcolor{gray!50}{/}\textcolor{gray}{8.0}
		& 97.0\textcolor{gray!50}{/}\textcolor{gray}{2.5}
		& 97.0\textcolor{gray!50}{/}\textcolor{gray}{2.5}
		& 99.0\textcolor{gray!50}{/}\textcolor{gray}{0.5}
		& 97.0\textcolor{gray!50}{/}\textcolor{gray}{2.5} \\
		\bottomrule
	\end{tabularx}
	\caption{Win/Loss rates (\%, Ties are omitted) of our review texts with different consolidation model against same-family naïve LLMs, and other baselines.}
	\label{tab:text_winrates}
\end{table}

% old: We systematically compare the review quality of our method with representative baselines. We randomly sample 200 papers and use LLM-as-a-Judge to evaluate the aggregated review texts. As shown in Table \ref{tab:text_winrates}, each comparison covers technical depth, evidence grounding, scientific rigor, revision utility, and overall preference. Our method achieves a win rate above 50\% against all baselines, indicating the highest overall review text quality.

We compare the review quality of GraphReview against baselines. We randomly sample 200 papers and adopt an LLM-as-a-Judge protocol to evaluate the aggregated reviews, with response order randomized to mitigate positional bias. GPT-5.4 serves as the evaluator, while all evaluated models belong to different model families from the evaluator to reduce potential bias. As shown in Table \ref{tab:text_winrates}, each pairwise comparison assesses technical depth, evidence grounding, scientific rigor, revision utility, and overall preference. 

The within-family evaluation shows that GraphReview substantially outperforms naïve LLM-based review generation using the same backbone, achieving overall preference win rates of 88.5\% with Gemini-2.5-Flash and 84.5\% with DeepSeek-V3.2. Replacing the naïve baselines with stronger next-generation models has only a limited impact on the win rates (Gemini: -6.5; DeepSeek: +1.0), suggesting that the improvements mainly arise from graph-based structured information fusion rather than increased backbone quality. 

GraphReview also outperforms other baselines across all evaluation dimensions, achieving overall preference win rates above 50\%.

\subsection{Hyperparameter Analysis}

\begin{figure}[htbp]
	\setlength{\belowcaptionskip}{-5pt} 
	\setlength{\floatsep}{10pt} 
	\setlength{\textfloatsep}{10pt}
	\centering
	\includegraphics[width=\columnwidth]{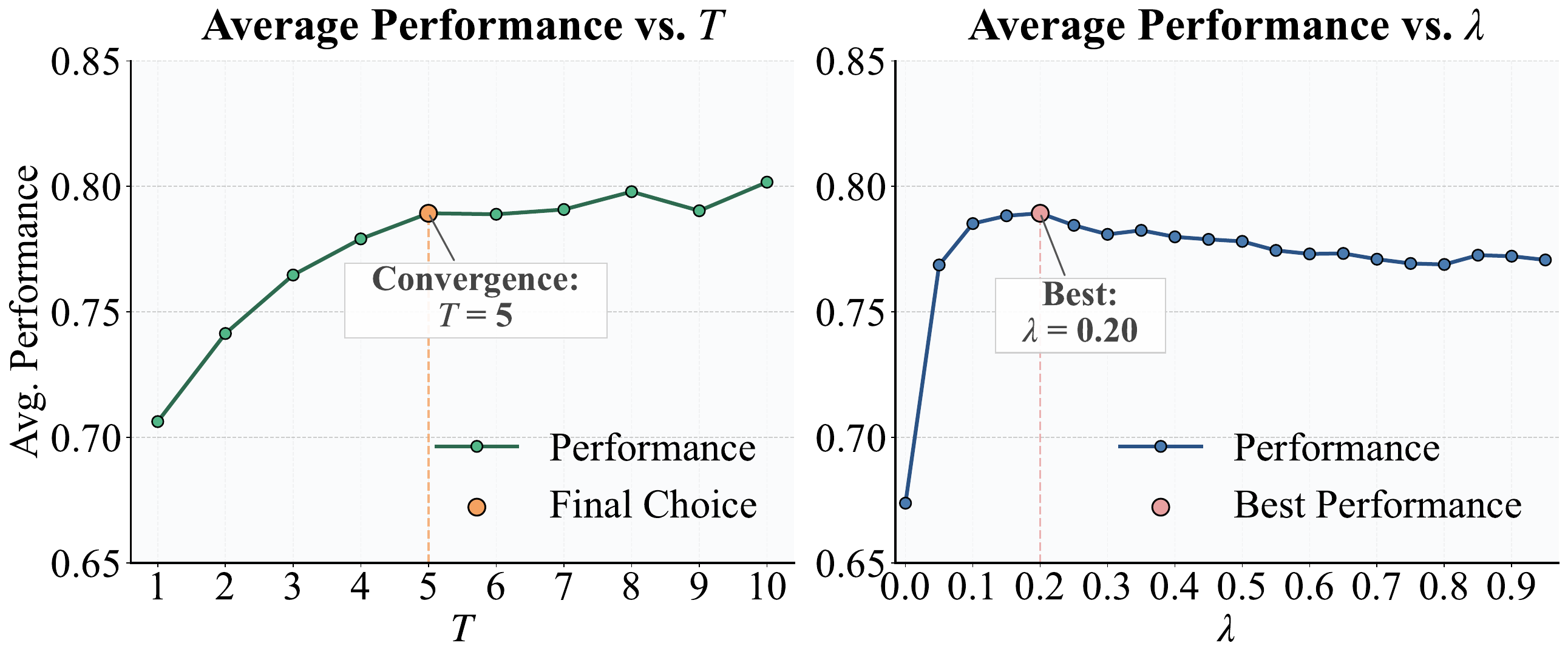}
	\caption{Hyperparameter analysis, including the convergence of model performance as $T$ increases (left) and the selection of the optimal damping factor $\lambda$ (right).}
	\label{fig:hyperparameter_combined}
\end{figure}

% old: We further analyze the key hyperparameters in Figure~\ref{fig:hyperparameter_combined}. We first examine how overall performance changes as $T$ increases. The results exhibit a clear convergence pattern: performance improves steadily in the first few iterations, while the gains become marginal after $T>5$. This observation indicates that $T=5$ achieves a favorable balance between effectiveness and efficiency.
We further analyze the key hyperparameters in Figure~\ref{fig:hyperparameter_combined}. We first examine how overall performance changes as $T$ increases. Here, $T$ controls inference-time expansion by activating more S2FM semantic edges and therefore increasing the amount of structured graph evidence available for aggregation. The results exhibit a clear convergence pattern: performance improves steadily in the first few iterations, while the gains become marginal after $T>5$. This observation indicates that $T=5$ achieves a favorable balance between effectiveness and efficiency. 

We then examine the effect of $\lambda$. Setting $\lambda=0$ corresponds to relying solely on point priors. Incorporating graph information consistently yields strong performance, with limited sensitivity to $\lambda$. The best result is achieved at $\lambda=0.20$. These results validate the effectiveness of graph-based evidence aggregation and show that moderate relational evidence can substantially refine the initial estimates while remaining robust across different values of $\lambda$.

% Pujun: 这里修改弱传播的说法，这个说法很容易引起误解，要参考rebuttal的内容

% old: We then investigate the effect of $\lambda$. Unlike conventional PageRank, which typically prefers a larger damping factor, PPR achieves the best performance at $\lambda=0.20$. This finding suggests that weaker graph propagation is better suited to our task, as a smaller $\lambda$ reduces error accumulation and mitigates oversmoothing.
% We then investigate the effect of $\lambda$, which controls the contribution of graph propagation relative to the initial quality estimates. Incorporating graph evidence with $\lambda=0.20$ yields the best average performance of 0.7893. These results indicate that moderate graph propagation effectively complements the initial estimates while preserving their reliability.

\subsection{Generalization}

% old: content...
% old: \end{asy}能力要超过这个模型。可以考虑加入未训练/随机模型的 baseline。

\begin{figure}[htbp]
	\setlength{\belowcaptionskip}{-5pt} 
	\setlength{\floatsep}{10pt} 
	\setlength{\textfloatsep}{10pt}
	\centering
	\includegraphics[width=\columnwidth]{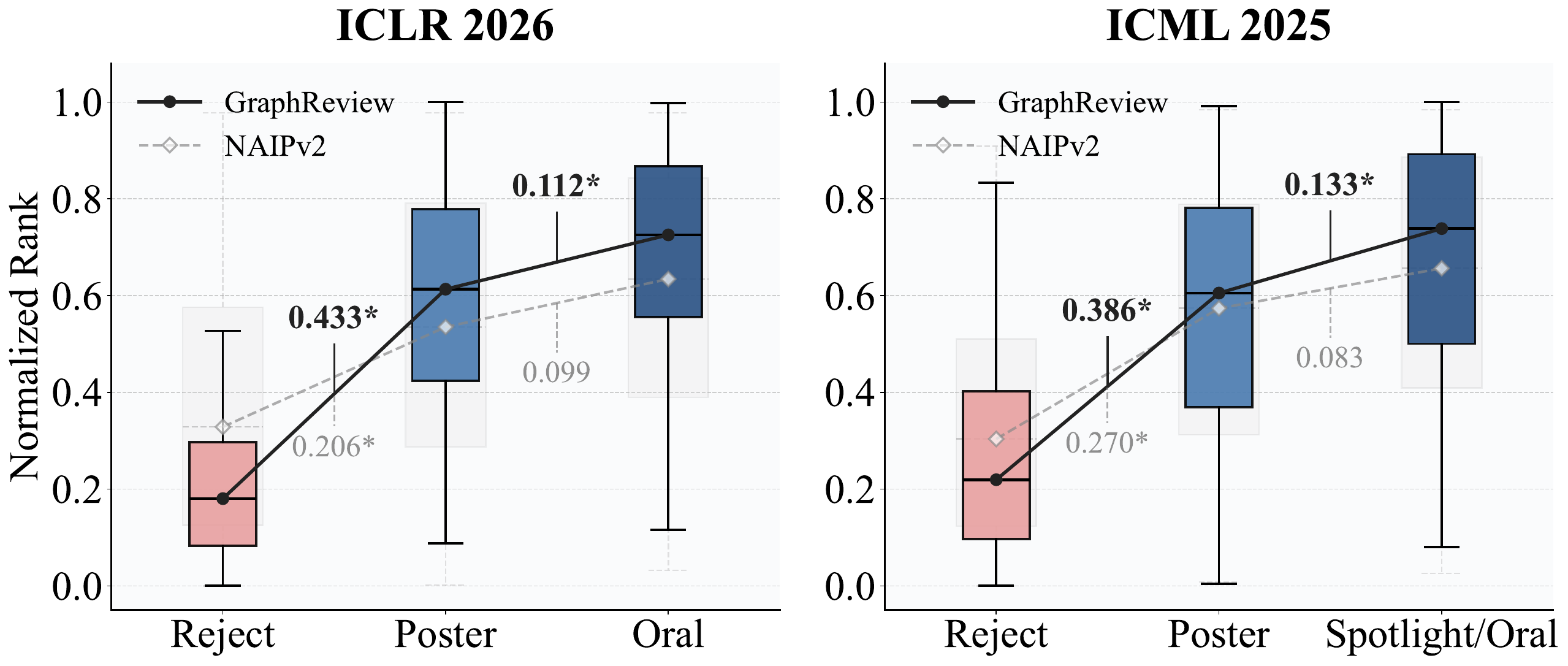}
	\caption{Generalization results. The y-axis shows normalized rank from 0 to 1, where higher values indicate higher predicted paper quality. The annotated values denote the median difference between groups, and $*$ indicates statistical significance with $p<0.001$.}
	\label{fig:boxplot_combined}
\end{figure}

% old: We evaluate generalization in two settings. The cross-time-period setting is based on ICLR 2026 and includes three categories: Rejected, Poster, and Oral. The cross-venue setting is based on ICML 2025 and includes Rejected, Poster, and Spotlight/Oral. For each conference, we randomly sample 500 papers while keeping the class distribution as balanced as possible. As percentile ranks do not follow a normal distribution, we use the nonparametric Mann-Whitney U test.
We evaluate generalization in two settings. The cross-time-period setting is based on ICLR 2026 and includes three categories: Rejected, Poster, and Oral. The cross-venue setting is based on ICML 2025 and includes Rejected, Poster, and Spotlight/Oral. For each conference, we randomly sample 500 papers while keeping the class distribution as balanced as possible. As percentile ranks do not follow a normal distribution, we use the nonparametric Mann-Whitney U test and compare GraphReview with the strongest baseline, NAIPv2-8B, under the same setting.

% old: The results show significant differences across all categories, with the same pattern appearing in both datasets: the gap between rejected and poster papers is large, at 0.433 for ICLR 2026 and 0.336 for ICML 2025, while the gap within accepted papers is smaller, at 0.112 and 0.133. This pattern matches review practice, in which accepted and rejected papers usually differ clearly in quality and differences within accepted papers are typically smaller, suggesting that our method captures both broad and fine-grained quality differences.
The results show significant differences across all categories, with a consistent trend across both datasets. The gap between rejected and poster papers is substantial, reaching 0.433 on ICLR 2026 and 0.386 on ICML 2025. In contrast, the gap among accepted papers is smaller, at 0.112 and 0.133, respectively. This pattern is consistent with review practice: accepted and rejected papers usually differ clearly in quality, whereas quality differences among accepted papers are more subtle. These results indicate that our method captures both coarse-grained and fine-grained distinctions in paper quality.

Compared with NAIPv2-8B, GraphReview increases the Reject-vs-Poster gap by 0.227 on ICLR 2026 and by 0.116 on ICML 2025. It also increases the accepted-paper gap by 0.013 and 0.050 on the two datasets, respectively. Moreover, unlike GraphReview, NAIPv2-8B does not yield a significant Poster-vs-Spotlight/Oral gap ($p>0.001$). These results suggest that GraphReview generalizes more effectively across time periods and venues when identifying differences in paper quality.

\section{Conclusion}

% 中间这一段用于调整长度
We propose GraphReview, an LLM-based graph framework for scientific paper evaluation that integrates complementary evidence through inference-time graph evidence expansion. 
% Unlike existing methods that fail to capture the structure of multi-source evidence, GraphReview supports the integration of diverse signals.
Experiments show that it achieves leading performance on quality-ranking, decision-prediction, and review-generation benchmarks, while demonstrating strong generalization.
% Overall, GraphReview provides a data-driven and holistic framework for leveraging graph-structured evidence in LLM-based paper evaluation, representing a meaningful step toward automated scientific paper evaluation.
Overall, GraphReview provides a data-driven and holistic framework for leveraging graph-structured review evidence, representing a meaningful step toward automated scientific paper evaluation.

\bibliography{aaai2027}

@misc{alberts2008reviewing,
	title={Reviewing peer review},
	author={Alberts, Bruce and Hanson, Brooks and Kelner, Katrina L},
	journal={Science},
	volume={321},
	number={5885},
	pages={15--15},
	year={2008},
	publisher={American Association for the Advancement of Science}
}

@article{he2023research,
	title={Research explosion: More effort to climb onto shoulders of the giant},
	author={He, Guoxiu and Sun, Aixin and Lu, Wei},
	journal={arXiv preprint arXiv:2307.06506},
	year={2023}
}

@inproceedings{zhou2024llm,
	title={Is LLM a reliable reviewer? a comprehensive evaluation of LLM on automatic paper reviewing tasks},
	author={Zhou, Ruiyang and Chen, Lu and Yu, Kai},
	booktitle={Proceedings of the 2024 joint international conference on computational linguistics, language resources and evaluation (LREC-COLING 2024)},
	pages={9340--9351},
	year={2024}
}

@inproceedings{du2024llms,
	title={LLMs assist NLP researchers: Critique paper (meta-) reviewing},
	author={Du, Jiangshu and Wang, Yibo and Zhao, Wenting and Deng, Zhongfen and Liu, Shuaiqi and Lou, Renze and Zou, Henry Peng and Venkit, Pranav Narayanan and Zhang, Nan and Srinath, Mukund and others},
	booktitle={Proceedings of the 2024 conference on empirical methods in natural language processing},
	pages={5081--5099},
	year={2024}
}

@article{zhuang2025large,
	title={Large language models for automated scholarly paper review: A survey},
	author={Zhuang, Zhenzhen and Chen, Jiandong and Xu, Hongfeng and Jiang, Yuwen and Lin, Jialiang},
	journal={Information Fusion},
	pages={103332},
	year={2025},
	publisher={Elsevier}
}

@article{thakkar2025can,
	title={Can llm feedback enhance review quality? a randomized study of 20k reviews at iclr 2025},
	author={Thakkar, Nitya and Yuksekgonul, Mert and Silberg, Jake and Garg, Animesh and Peng, Nanyun and Sha, Fei and Yu, Rose and Vondrick, Carl and Zou, James},
	journal={arXiv preprint arXiv:2504.09737},
	year={2025}
}

@article{latona2024ai,
	title={The ai review lottery: Widespread ai-assisted peer reviews boost paper scores and acceptance rates},
	author={Latona, Giuseppe Russo and Ribeiro, Manoel Horta and Davidson, Tim R and Veselovsky, Veniamin and West, Robert},
	journal={arXiv preprint arXiv:2405.02150},
	year={2024}
}

@article{li2025can,
	title={Can Large Language Models Be Trusted Paper Reviewers? A Feasibility Study},
	author={Li, Chuanlei and Hu, Xu and Xu, Minghui and Li, Kun and Zhang, Yue and Cheng, Xiuzhen},
	journal={arXiv preprint arXiv:2506.17311},
	year={2025}
}

@article{thakkar2026large,
	title={A large-scale randomized study of large language model feedback in peer review},
	author={Thakkar, Nitya and Yuksekgonul, Mert and Silberg, Jake and Garg, Animesh and Peng, Nanyun and Sha, Fei and Yu, Rose and Vondrick, Carl and Zou, James},
	journal={Nature Machine Intelligence},
	pages={1--11},
	year={2026},
	publisher={Nature Publishing Group UK London}
}

@article{wu2019large,
	title={Large teams develop and small teams disrupt science and technology},
	author={Wu, Lingfei and Wang, Dashun and Evans, James A},
	journal={Nature},
	volume={566},
	number={7744},
	pages={378--382},
	year={2019},
	publisher={Nature Publishing Group UK London}
}

@article{fortunato2018science,
	title={Science of science},
	author={Fortunato, Santo and Bergstrom, Carl T and B{\"o}rner, Katy and Evans, James A and Helbing, Dirk and Milojevi{\'c}, Sta{\v{s}}a and Petersen, Alexander M and Radicchi, Filippo and Sinatra, Roberta and Uzzi, Brian and others},
	journal={Science},
	volume={359},
	number={6379},
	pages={eaao0185},
	year={2018},
	publisher={American Association for the Advancement of Science}
}

@inproceedings{gilmer2017neural,
	title={Neural message passing for quantum chemistry},
	author={Gilmer, Justin and Schoenholz, Samuel S and Riley, Patrick F and Vinyals, Oriol and Dahl, George E},
	booktitle={International conference on machine learning},
	pages={1263--1272},
	year={2017},
	organization={Pmlr}
}

@inproceedings{jeh2003scaling,
	title={Scaling personalized web search},
	author={Jeh, Glen and Widom, Jennifer},
	booktitle={Proceedings of the 12th international conference on World Wide Web},
	pages={271--279},
	year={2003}
}

@inproceedings{jin2024agentreview,
	title={AgentReview: Exploring Peer Review Dynamics with LLM Agents},
	author={Jin, Yiqiao and Zhao, Qinlin and Wang, Yiyang and Chen, Hao and Zhu, Kaijie and Xiao, Yijia and Wang, Jindong},
	booktitle={Proceedings of the 2024 Conference on Empirical Methods in Natural Language Processing},
	pages={1208--1226},
	year={2024}
}

@article{lu2024ai,
	title={The ai scientist: Towards fully automated open-ended scientific discovery},
	author={Lu, Chris and Lu, Cong and Lange, Robert Tjarko and Foerster, Jakob and Clune, Jeff and Ha, David},
	journal={arXiv preprint arXiv:2408.06292},
	year={2024}
}

@inproceedings{weng2025cycleresearcher,
	title={CycleResearcher: Improving Automated Research via Automated Review},
	author={Weng, Yixuan and Zhu, Minjun and Bao, Guangsheng and Zhang, Hongbo and Wang, Jindong and Zhang, Yue and Yang, Linyi},
	booktitle={The Thirteenth International Conference on Learning Representations},
	year={2025}
}

@article{tan2024peer,
	title={Peer review as a multi-turn and long-context dialogue with role-based interactions},
	author={Tan, Cheng and Lyu, Dongxin and Li, Siyuan and Gao, Zhangyang and Wei, Jingxuan and Ma, Siqi and Liu, Zicheng and Li, Stan Z},
	journal={arXiv preprint arXiv:2406.05688},
	year={2024}
}

@inproceedings{zeng2025reviewrl,
	title={ReviewRL: Towards Automated Scientific Review with RL},
	author={Zeng, Sihang and Tian, Kai and Zhang, Kaiyan and Wang, Yuru and Gao, Junqi and Liu, Runze and Yang, Sa and Li, Jingxuan and Long, Xinwei and Ma, Jiaheng and others},
	booktitle={Proceedings of the 2025 Conference on Empirical Methods in Natural Language Processing},
	pages={16942--16954},
	year={2025}
}

@inproceedings{yu2024automated,
	title={Automated Peer Reviewing in Paper SEA: Standardization, Evaluation, and Analysis},
	author={Yu, Jianxiang and Ding, Zichen and Tan, Jiaqi and Luo, Kangyang and Weng, Zhenmin and Gong, Chenghua and Zeng, Long and Cui, RenJing and Han, Chengcheng and Sun, Qiushi and others},
	booktitle={Findings of the Association for Computational Linguistics: EMNLP 2024},
	pages={10164--10184},
	year={2024}
}

@article{tyser2024ai,
	title={Ai-driven review systems: evaluating llms in scalable and bias-aware academic reviews},
	author={Tyser, Keith and Segev, Ben and Longhitano, Gaston and Zhang, Xin-Yu and Meeks, Zachary and Lee, Jason and Garg, Uday and Belsten, Nicholas and Shporer, Avi and Udell, Madeleine and others},
	journal={arXiv preprint arXiv:2408.10365},
	year={2024}
}

@article{garg2025revieweval,
	title={ReviewEval: An Evaluation Framework for AI-Generated Reviews},
	author={Garg, Madhav Krishan and Prasad, Tejash and Singhal, Tanmay and Kirtani, Chhavi and Mandal, Murari and Kumar, Dhruv},
	journal={arXiv preprint arXiv:2502.11736},
	year={2025}
}

@inproceedings{chang2025treereview,
	title={TreeReview: A dynamic tree of questions framework for deep and efficient LLM-based scientific peer review},
	author={Chang, Yuan and Li, Ziyue and Zhang, Hengyuan and Kong, Yuanbo and Wu, Yanru and So, Hayden Kwok-Hay and Guo, Zhijiang and Zhu, Liya and Wong, Ngai},
	booktitle={Proceedings of the 2025 Conference on Empirical Methods in Natural Language Processing},
	pages={15662--15693},
	year={2025}
}

@inproceedings{zhu2025deepreview,
	title={DeepReview: Improving LLM-based Paper Review with Human-like Deep Thinking Process},
	author={Zhu, Minjun and Weng, Yixuan and Yang, Linyi and Zhang, Yue},
	booktitle={Proceedings of the 63rd Annual Meeting of the Association for Computational Linguistics (Volume 1: Long Papers)},
	pages = {29330--29355},
	year={2025}
}

@inproceedings{lu2025agent,
	title={Agent Reviewers: Domain-specific Multimodal Agents with Shared Memory for Paper Review},
	author={Lu, Kai and Xu, Shixiong and Li, Jinqiu and Ding, Kun and Meng, Gaofeng},
	booktitle={Forty-second International Conference on Machine Learning},
	pages = {40803--40830},
	year={2025}
}

@inproceedings{zhang2025from,
	title={From Replication to Redesign: Exploring Pairwise Comparisons for {LLM}-Based Peer Review},
	author={Yaohui Zhang and Haijing ZHANG and Wenlong Ji and Tianyu Hua and Nick Haber and Hancheng Cao and Weixin Liang},
	booktitle={The Thirty-ninth Annual Conference on Neural Information Processing Systems},
	year={2025}
}

@article{hopner2025automatic,
	title={Automatic Evaluation Metrics for Artificially Generated Scientific Research},
	author={H{\"o}pner, Niklas and Eshuijs, Leon and Alivanistos, Dimitrios and Zamprogno, Giacomo and Tiddi, Ilaria},
	journal={arXiv preprint arXiv:2503.05712},
	year={2025}
}

@inproceedings{zhao2025words,
	title={From Words to Worth: Newborn Article Impact Prediction with LLM},
	author={Zhao, Penghai and Xing, Qinghua and Dou, Kairan and Tian, Jinyu and Tai, Ying and Yang, Jian and Cheng, Ming-Ming and Li, Xiang},
	booktitle={Proceedings of the AAAI Conference on Artificial Intelligence},
	volume={39},
	number={1},
	pages={1183--1191},
	year={2025}
}

@article{he2023h2cgl,
	title={H2CGL: Modeling dynamics of citation network for impact prediction},
	author={He, Guoxiu and Xue, Zhikai and Jiang, Zhuoren and Kang, Yangyang and Zhao, Star and Lu, Wei},
	journal={Information Processing \& Management},
	volume={60},
	number={6},
	pages={103512},
	year={2023},
	publisher={Elsevier}
}

@inproceedings{xue2024predicting,
	title={Predicting scientific impact through diffusion, conformity, and contribution disentanglement},
	author={Xue, Zhikai and He, Guoxiu and Jiang, Zhuoren and Gu, Sichen and Kang, Yangyang and Zhao, Star and Lu, Wei},
	booktitle={Proceedings of the 33rd ACM International Conference on Information and Knowledge Management},
	pages={2764--2774},
	year={2024}
}

@article{zheng2026isolated,
	title={From Isolated Scoring to Collaborative Ranking: A Comparison-Native Framework for LLM-Based Paper Evaluation},
	author={Zheng, Pujun and Yao, Jiacheng and Zheng, Jinquan and Gu, Chenyang and He, Guoxiu and Liu, Jiawei and Huang, Yong and Guo, Tianrui and Lu, Wei},
	journal={arXiv preprint arXiv:2603.17588},
	year={2026}
}

@article{zhao2025naipv2,
	title={NAIPv2: Debiased Pairwise Learning for Efficient Paper Quality Estimation},
	author={Zhao, Penghai and Tian, Jinyu and Xing, Qinghua and Zhang, Xin and Li, Zheng and Qian, Jianjun and Cheng, Ming-Ming and Li, Xiang},
	journal={arXiv preprint arXiv:2509.25179},
	year={2025}
}

@article{velivckovic2017graph,
	title={Graph attention networks},
	author={Veli{\v{c}}kovi{\'c}, Petar and Cucurull, Guillem and Casanova, Arantxa and Romero, Adriana and Lio, Pietro and Bengio, Yoshua},
	journal={arXiv preprint arXiv:1710.10903},
	year={2017}
}

@article{hamilton2017inductive,
	title={Inductive representation learning on large graphs},
	author={Hamilton, Will and Ying, Zhitao and Leskovec, Jure},
	journal={Advances in neural information processing systems},
	volume={30},
	year={2017}
}

@article{kipf2016semi,
	title={Semi-supervised classification with graph convolutional networks},
	author={Kipf, Thomas N and Welling, Max},
	journal={arXiv preprint arXiv:1609.02907},
	year={2016}
}

@article{tao2025code,
	title={Code graph model (cgm): A graph-integrated large language model for repository-level software engineering tasks},
	author={Tao, Hongyuan and Zhang, Ying and Tang, Zhenhao and Peng, Hongen and Zhu, Xukun and Liu, Bingchang and Yang, Yingguang and Zhang, Ziyin and Xu, Zhaogui and Zhang, Haipeng and others},
	journal={arXiv preprint arXiv:2505.16901},
	year={2025}
}

@article{luo2025g,
	title={G-reasoner: Foundation Models for Unified Reasoning over Graph-structured Knowledge},
	author={Luo, Linhao and Zhao, Zicheng and Liu, Junnan and Qiu, Zhangchi and Dong, Junnan and Panev, Serge and Gong, Chen and Vu, Thuy-Trang and Haffari, Gholamreza and Phung, Dinh and others},
	journal={arXiv preprint arXiv:2509.24276},
	year={2025}
}

@article{jing2026entropy,
	title={Entropy-Guided Dynamic Tokens for Graph-LLM Alignment in Molecular Understanding},
	author={Jing, Zihao and Zeng, Qiuhao and Fang, Ruiyi and Sun, Yan and Wang, Boyu and Hu, Pingzhao},
	journal={arXiv preprint arXiv:2602.02742},
	year={2026}
}

@article{dong2025youtu,
	title={Youtu-graphrag: Vertically unified agents for graph retrieval-augmented complex reasoning},
	author={Dong, Junnan and An, Siyu and Yu, Yifei and Zhang, Qian-Wen and Luo, Linhao and Huang, Xiao and Wu, Yunsheng and Yin, Di and Sun, Xing},
	journal={arXiv preprint arXiv:2508.19855},
	year={2025}
}

@article{finkelshtein2025actions,
	title={Actions Speak Louder than Prompts: A Large-Scale Study of LLMs for Graph Inference},
	author={Finkelshtein, Ben and Cucerzan, Silviu and Jauhar, Sujay Kumar and White, Ryen},
	journal={arXiv preprint arXiv:2509.18487},
	year={2025}
}

@article{zhang2023graph,
	title={Graph-toolformer: To empower llms with graph reasoning ability via prompt augmented by chatgpt},
	author={Zhang, Jiawei},
	journal={arXiv preprint arXiv:2304.11116},
	year={2023}
}

@inproceedings{tang2024graphgpt,
	title={Graphgpt: Graph instruction tuning for large language models},
	author={Tang, Jiabin and Yang, Yuhao and Wei, Wei and Shi, Lei and Su, Lixin and Cheng, Suqi and Yin, Dawei and Huang, Chao},
	booktitle={Proceedings of the 47th International ACM SIGIR Conference on Research and Development in Information Retrieval},
	pages={491--500},
	year={2024}
}

@article{gutierrez2024hipporag,
	title={Hipporag: Neurobiologically inspired long-term memory for large language models},
	author={Guti{\'e}rrez, Bernal J and Shu, Yiheng and Gu, Yu and Yasunaga, Michihiro and Su, Yu},
	journal={Advances in neural information processing systems},
	volume={37},
	pages={59532--59569},
	year={2024}
}

@article{guo2024lightrag,
	title={Lightrag: Simple and fast retrieval-augmented generation},
	author={Guo, Zirui and Xia, Lianghao and Yu, Yanhua and Ao, Tian and Huang, Chao},
	journal={arXiv preprint arXiv:2410.05779},
	volume={2},
	number={3},
	year={2024}
}

@article{edge2024local,
	title={From local to global: A graph rag approach to query-focused summarization},
	author={Edge, Darren and Trinh, Ha and Cheng, Newman and Bradley, Joshua and Chao, Alex and Mody, Apurva and Truitt, Steven and Metropolitansky, Dasha and Ness, Robert Osazuwa and Larson, Jonathan},
	journal={arXiv preprint arXiv:2404.16130},
	year={2024}
}

@article{zhao2023graphtext,
	title={Graphtext: Graph reasoning in text space},
	author={Zhao, Jianan and Zhuo, Le and Shen, Yikang and Qu, Meng and Liu, Kai and Bronstein, Michael and Zhu, Zhaocheng and Tang, Jian},
	journal={arXiv preprint arXiv:2310.01089},
	year={2023}
}

@article{wang2023can,
	title={Can language models solve graph problems in natural language?},
	author={Wang, Heng and Feng, Shangbin and He, Tianxing and Tan, Zhaoxuan and Han, Xiaochuang and Tsvetkov, Yulia},
	journal={Advances in Neural Information Processing Systems},
	volume={36},
	pages={30840--30861},
	year={2023}
}

@article{wang2025unigte,
	title={UniGTE: Unified Graph-Text Encoding for Zero-Shot Generalization across Graph Tasks and Domains},
	author={Wang, Duo and Zuo, Yuan and Lu, Guangyue and Wu, Junjie},
	journal={arXiv preprint arXiv:2510.16885},
	year={2025}
}

@article{joshi2025transformers,
	title={Transformers are graph neural networks},
	author={Joshi, Chaitanya K},
	journal={arXiv preprint arXiv:2506.22084},
	year={2025}
}

@article{frasca2025neural,
	title={Neural Message-Passing on Attention Graphs for Hallucination Detection},
	author={Frasca, Fabrizio and Bar-Shalom, Guy and Ziser, Yftah and Maron, Haggai},
	journal={arXiv preprint arXiv:2509.24770},
	year={2025}
}

@article{qwen2025qwen25technicalreport,
	title={Qwen2.5 Technical Report}, 
	author = {{Qwen Team}},
	journal={arXiv preprint arXiv:2412.15115}, 
	year={2024},
}

@inproceedings{hu2022lora,
	title={LoRA: Low-Rank Adaptation of Large Language Models},
	author={Hu, Edward J and Wallis, Phillip and Allen-Zhu, Zeyuan and Li, Yuanzhi and Wang, Shean and Wang, Lu and Chen, Weizhu and others},
	booktitle={International Conference on Learning Representations},
	year={2022}
}

@inproceedings{zheng2026navigating,
	title={Navigating Through Paper Flood: Advancing LLM-Based Paper Evaluation Through Domain-Aware Retrieval and Latent Reasoning},
	author={Zheng, Wuqiang and Xu, Yiyan and Lin, Xinyu and Gao, Chongming and Wang, Wenjie and Feng, Fuli},
	booktitle={Proceedings of the AAAI Conference on Artificial Intelligence},
	volume={40},
	number={41},
	pages={35041--35049},
	year={2026}
}

@inproceedings{feng2025grapheval,
	author = {Feng, Tao and Sun, Yihang and You, Jiaxuan},
	booktitle = {International Conference on Learning Representations},
	editor = {Y. Yue and A. Garg and N. Peng and F. Sha and R. Yu},
	pages = {99181--99209},
	title = {GraphEval: A Lightweight Graph-Based LLM Framework for Idea Evaluation},
	url = {https://proceedings.iclr.cc/paper_files/paper/2025/file/f5ce40ee957e4f76ef53c09d0bae20f4-Paper-Conference.pdf},
	volume = {2025},
	year = {2025}
}

\appendix
\label{sec:appendix}
\newpage
\section{Use of Generative AI}

Generative artificial intelligence was used in this study only for language support, including grammar correction, phrasing refinement, and style standardization. It was not involved in the paper's conceptualization, scientific analysis, or substantive writing. All AI-assisted content was reviewed, verified, and, where necessary, revised by the authors in accordance with ethical standards. The authors take full responsibility for the paper's content and conclusions.

\section{Theoretical Implications}

A growing line of work in automated paper evaluation seeks to mimic the human review process. We take a different view. In our opinion, \textit{the more closely a method reproduces the procedure by which humans review papers, the further it may be from an ideal decision process}. Human reviewing involves rich internal reasoning and highly personal judgments that are not reflected in the final written review. A strong review system should therefore learn from the evidence-based reasoning underlying human judgment, rather than from the surface form of review texts.

Therefore, AI-based paper evaluation does not need to follow the same paradigm as human reviewing. Machines offer distinct advantages. Unlike human reviewers, they can draw on much larger volumes of data and perform fine-grained comparisons between a paper and a wide range of relevant evidence. This differs fundamentally from human practice. Due to limited time, human reviewers often cannot make such extensive and detailed empirical connections, and instead rely on broad impressions formed through prior experience in the field. This is exactly why we introduce graph structure into the review process: to make the evidence chain explicit while exploiting the large-scale processing capabilities of machines. At the same time, humans retain unique strengths, particularly intuition for innovation and for identifying important problems. Such intuition is grounded in embodied experience and creativity. Current AI systems do not yet possess this ability, and closing this gap should remain an important direction for future research.

In the long run, AI-assisted reviewing should move beyond imitating human review and instead develop into a review paradigm with its own strengths: comprehensive, data-intensive, and systems-oriented. Such a paradigm can complement human capabilities more effectively in a future of human-AI collaboration. From this perspective, our work represents a meaningful step toward that goal.

\section{Full Literature Review}

\subsection{Automated Paper Review}

The most common line of work evaluates a paper's inherent attributes, using LLM-driven agent review frameworks to simulate and reproduce the peer-review process with automated pointwise scoring \cite{jin2024agentreview,lu2024ai}, often augmented by iterative, multi-round conversational reviewing \cite{weng2025cycleresearcher,tan2024peer}. Many approaches further improve performance through training, transforming open-source models into expert reviewers via domain-specific fine-tuning or reinforcement learning \cite{zeng2025reviewrl}. For example, \citet{yu2024automated} fine-tunes models to generate customized feedback aligned with human preferences \cite{tyser2024ai,garg2025revieweval}. Other methods employ agents to construct tree-structured workflows to enhance generation quality \cite{chang2025treereview}. More recent systems combine model training with agent techniques, such as integrating evidence-based reasoning and structured analysis into multi-stage workflows \cite{zhu2025deepreview}, or building multi-agent systems with prompt engineering, shared memory, and multimodal perception \cite{lu2025agent}.

However, recent studies suggest that pointwise scoring is sensitive to inconsistencies in rating scales across venues and time. As a result, many works shift to parallel submissions within the same venue and develop comparison-based scoring strategies. Examples include predicting paper quality scores via pairwise comparisons \cite{zhao2025naipv2}, performing pairwise comparisons and aggregating preferences to derive a global ranking \cite{zhang2025from,hopner2025automatic, zheng2026navigating}, and combining similarity-based sampling with comparison-reward-driven reinforcement learning \cite{zheng2026isolated}.

Despite incorporating broader domain information, most methods still overlook recent, non-contemporaneous knowledge in the field. Such knowledge reflects patterns of scholarly communication and inheritance and is crucial for assessing the true contribution of a new paper. Although integrating non-contemporaneous knowledge has been explored in graph-based impact prediction methods \cite{he2023h2cgl, xue2024predicting}, it remains underexplored in LLM-based systems. While \citet{zhu2025deepreview} includes a retrieval module in its workflow, it does not empirically validate its effectiveness. Only \citet{zhao2025words} makes substantial use of this information by retrieving related papers via external tools and predicting standardized impact scores through listwise ranking.

Overall, existing approaches almost always study these information sources in isolation and rarely integrate them effectively. A promising and important direction is to consider them jointly within a unified graph-based system.

\subsection{Graph Methods for LLMs}

Before the rise of large language models, the dominant paradigm for graph tasks was message-passing-based graph neural networks (GNNs), which perform representation learning and downstream prediction via neighborhood aggregation \cite{kipf2016semi,velivckovic2017graph,gilmer2017neural,hamilton2017inductive}.

As LLMs have advanced in language understanding and knowledge representation, graph methods have gradually become complementary to LLMs. Some early approaches rewrote local graph structures into natural language descriptions using handcrafted rules and fed them into models as prompts \cite{wang2023can, zhao2023graphtext}.

Recent studies aim to align graph structure and linguistic semantics at the levels of representation space and reasoning mechanisms. Through instruction tuning or adapter modules, they endow LLMs with structured graph understanding. For example, GraphGPT \cite{tang2024graphgpt} integrates LLMs with graph structural knowledge via graph instruction tuning; UniGTE \cite{wang2025unigte} unifies structure and semantics by attending over graph tokens and natural language, enabling cross-task reasoning; \citet{jing2026entropy} achieves efficient alignment between a frozen graph encoder and a frozen LLM; G-reasoner \cite{luo2025g} unifies graph and language foundation models into a general graph representation for scalable reasoning; \citet{tao2025code} incorporates repository code graphs into attention and maps node attributes through adapters; and \citet{feng2025grapheval} use a lightweight graph-based LLM framework for idea evaluation.

Meanwhile, another line of work leverages graphs for retrieval-augmented generation, using knowledge graphs as external knowledge bases. Representative examples include GraphRAG \cite{edge2024local}, which introduces a community summarization and stepwise aggregation framework; LightRAG \cite{guo2024lightrag}, which combines graph structure with vector representations via a two-level retrieval scheme; HippoRAG \cite{gutierrez2024hipporag}, which performs memory-style retrieval with personalized PageRank; and an integrated system that adopts agent-based ideas to unify retrieval, graph construction, and community detection \cite{dong2025youtu}.

Other approaches treat graphs as interactive environments, mitigating LLM limitations in multi-step logical reasoning, precise computation, and spatiotemporal awareness through multi-step planning and tool use \cite{zhang2023graph}, including generating code to execute graph algorithms \cite{finkelshtein2025actions}.

However, for semantically centered paper review problems, graph methods that primarily rely on modality alignment or external retrieval are not directly reusable. The texts in our setting exhibit fine-grained semantic and argumentative relations that are tightly coupled with the review task. Forcing text into node embeddings, or making results explicit via alignment and averaging, can lead to semantic compression and partially undermine the transparency and interpretability of review texts. We therefore turn to a redesign of the overall graph-based approach.

\section{Method Details}

\subsection{Sequential 2-Factor Matching}
\label{sec:sequential_2_factor_matching}

\begin{algorithm}[t]
	\caption{Sequential 2-Factor Matching for Evidence Expansion}
	\label{alg:2factor}
	\begin{algorithmic}[1]
		\small
		\REQUIRE Node embeddings $\{\mathbf{h}_u\}_{u=1}^{N}$, graph-expansion budget $T$
		\ENSURE Adjacency matrix $\mathbf{C}^{(T)}$
		\STATE \textbf{Initialize} $t\gets 0$, $\mathbf{C}^{(0)}\gets \mathbf{0}$
		\WHILE{$t < T$}
		\STATE Obtain $\Delta\mathbf{C}^{(t)}$ by solving:
		\setlength{\jot}{2pt}
		\setlength{\abovedisplayskip}{0pt}
		\setlength{\belowdisplayskip}{-10pt}
		\begin{align*}
			\max_{\Delta c^{(t)}_{uv}}\quad 
			& \sum_{u=1}^n \sum_{v=1}^n \mathbf{h}_u^\top \mathbf{h}_v \cdot \Delta c^{(t)}_{uv} \qquad\qquad\quad\\
			\text{s.t.}\quad 
			& \sum_{v=1}^n \Delta c^{(t)}_{uv} = 2,\\
			& \Delta c^{(t)}_{uv} = \Delta c^{(t)}_{vu},\\
			& \Delta c^{(t)}_{uv} \in \{0,1\},\\
			& \Delta c^{(t)}_{uv} \le 1 - c^{(t-1)}_{uv}.
		\end{align*}
		\STATE $\mathbf{C}^{(t)} \gets \mathbf{C}^{(t-1)} + \Delta \mathbf{C}^{(t)}$
		\STATE $t \gets t+1$
		\ENDWHILE
		\RETURN $\mathbf{C}^{(T)}$
	\end{algorithmic}
\end{algorithm}

We propose a Sequential 2-Factor Matching algorithm to efficiently construct a sparse comparison adjacency matrix with time complexity $O(TN)$, where $T \ll N$. The full procedure is given in Algorithm~\ref{alg:2factor}.
Let $\mathbf{C}^{(t)} \in \mathbb{R}^{N \times N}$ denote the matching state after expansion step $t$, where $c_{uv}^{(t)}$ indicates whether an edge has been activated between nodes $u$ and $v$. At each step, we solve a maximum-weight 2-factor matching problem to obtain $\Delta \mathbf{C}^{(t)}$, which adds high-similarity edges based on pairwise inner products $\mathbf{h}_u^\top \mathbf{h}_v$. The constraints enforce binary and symmetric edge assignments, require each node to connect to exactly two new neighbors in the current step, and exclude previously selected edges. We then update the graph by $\mathbf{C}^{(t)} = \mathbf{C}^{(t-1)} + \Delta \mathbf{C}^{(t)}$ and repeat the process for the fixed budget $T$. Thus, $T$ controls comparison coverage, and the final degree of each node is $2T$.

This matching procedure has several desirable properties:
\begin{enumerate}
	\item $\mathbf{C}^{(T)}$ is the sum of $T$ disjoint 2-factors and satisfies greedy subset inclusion:
	$\mathbf{C}^{(1)} \subset \mathbf{C}^{(2)} \subset \cdots \subset \mathbf{C}^{(T)}$,
	which ensures consistent behavior under scaling. The final degree of each node is $D=2T$.
	\item Assuming a unique optimal 2-factor exists, the algorithm is permutation equivariant, meaning the optimal result is invariant to node relabeling.
	\item For $T\ge 1$ and $N\ge 3$, the loop executes at least once, so the graph always contains edges. This follows from the fact that any complete graph $K_N$ with $N\ge 3$ admits a 2-factor.
\end{enumerate}

\subsection{Text Consolidation}
\label{sec:text_consolidation}

In practice, for node-level inference, an LLM's output can be decomposed into two types of information, denoted as \(f_{\mathrm{LLM}}(x_v, p_{\mathrm{s}})=({\hat y}_{\mathrm{s}}, {\hat t}_{\mathrm{s}})\). Here, \({\hat y}_{\mathrm{s}}\) is a multi-class probabilistic label distribution aligned with the training objective, which can be further converted into a scalar score, and \({\hat t}_{\mathrm{s}}\) is a natural-language rationale for the review. Similarly, for edge-level inference, the model output is \(f_{\mathrm{LLM}}(x_u, x_v, p_{\mathrm{c}})=({\hat y}_{\mathrm{c}}, {\hat t}_{\mathrm{c}})\), where \({\hat y}_{\mathrm{c}}\) represents a preference signal such as \(x_u \succ x_v\) and is used directly for score construction, while \({\hat t}_{\mathrm{c}}\) provides the corresponding natural-language rationale.

PPR operates on numerical priors and directed preference transitions rather than textual rationales. We therefore introduce a parallel text-processing procedure alongside PPR-based ranking aggregation. Specifically, PPR numerically aggregates the pointwise prior and directed preferences over the fixed graph, whereas the rationale texts associated with nodes and edges are collected, organized, and merged in parallel.

Following common review-writing conventions, we further reformat these scattered text fragments into a coherent evaluation paragraph. We design a lightweight prompt that instructs a model to reformat the collected texts. Under this prompt, the model consistently restructures information from multiple sources and produces a consolidated evaluation with improved organization and coherence. We implement this step via an external instruction-following model as a text refinement module, instantiated as DeepSeek-V3.2 in our experiments. This module is model-agnostic and can be replaced by any instruction-following LLM with basic text organization capabilities.

The refined evaluation text serves two purposes. First, it presents the semantic content in a clearer and more readable form for human authors and reviewers. Second, it enables qualitative comparison with texts generated by other methods in our experiments.

\subsection{Iterative Prompt Evolving}
\label{sec:iterative_prompt_evolving}

Paper reviewing is a specialized and cognitively demanding task. Even domain experts may find it challenging to articulate evaluation criteria that are both explicit and sufficiently fine-grained. To obtain prompts that are robust and comprehensive enough to cover the multiple dimensions of paper assessment, we develop a task-specific evolving procedure for prompt construction. As shown in Figure \ref{fig:train_process} (right), this process can be written as $p^{(0)} \rightarrow p^{(1)} \rightarrow \cdots \rightarrow p^{(M)}$. Starting from an initial prompt $p^{(0)}$, the teacher model iteratively refines the prompt over $M$ rounds. At each step, the model compares the outputs produced before and after refinement, and retains the prompt that leads to the better result. The final prompt, denoted by $p^* = p^{(M)}$, is then used to generate cold-start responses. These responses further serve as SFT data that teach the student model the evaluation dimensions and criteria underlying paper review decisions.

The update at iteration $m$ is defined as: 
\begin{equation}
	p^{(m)} = f_{\mathrm{Judger}}\big(p^{(m-1)}, f_{\mathrm{Evolver}}(p^{(m-1)})\big)
\end{equation}

At iteration $m$, the procedure takes the current best prompt from the previous round, $p^{(m-1)}$, as input and produces a refined candidate through $f_{\mathrm{Evolver}}$. The function $f_{\mathrm{Judger}}$ then compares the current prompt with its refined version and selects the one that yields the better outcome. If the refined prompt improves performance, it is accepted as new $p^{(m)}$; otherwise, the previous prompt is preserved. Both $f_{\mathrm{Evolver}}$ and $f_{\mathrm{Judger}}$ are implemented with a designated teacher model through prompt-based optimization. In our implementation, we use DeepSeek-V3.2 as the teacher model by default.

\subsection{Reward-Induced Maximum Likelihood}
\label{sec:reward_induced_maximum_likelihood}

RIML is used in training for both node and edge capabilities. At the node level, the target distribution is softly induced by distance-based rewards over score anchors. At the edge level, the target distribution is a binary one-hot label induced by the pairwise ranking relation weighted by the score difference. For the complete training pipeline, see Figure \ref{fig:train_process} (right).

\paragraph{Node-level training}
For each sample $v$, let $s_v$ denote its ground-truth score and let the input query be $q_v=(x_v,p_{\mathrm{s}})$, where $x_v$ is the paper content and $p_{\mathrm{s}}$ is the prompt for the single-paper scoring task.

Because review scores in many top conferences are not restricted to integer values, we introduce a set of score anchors to approximate the continuous score space using a finite set of proxy classes:
\begin{equation}
	\mathbf{a}=\{a_1,a_2,\dots,a_K\}
\end{equation}

The model outputs logits over these proxy class tokens:
\begin{equation}
	\mathbf{z}_v=(z_{v1},z_{v2},\dots,z_{vK})\in\mathbb{R}^K
\end{equation}

The predictive distribution is defined as:
\begin{equation}
	P(k\mid q_v)=\frac{\exp(z_{vk})}{\sum_{l=1}^{K}\exp(z_{vl})}
\end{equation}

Rather than hard-assigning $s_v$ to its nearest anchor, we define a distance-based reward for each anchor $a_k$:
\begin{equation}
	r_{vk}=-\frac{(s_v-a_k)^2}{2\sigma^2}
\end{equation}

Here, $\sigma$ controls the smoothness of the supervision distribution. The reward-induced target distribution is then given by:
\begin{equation}
	y_{vk}=\frac{\exp(r_{vk})}{\sum_{l=1}^{K}\exp(r_{vl})}
\end{equation}

The node-level objective is defined as:
\begin{equation}
	\mathcal{L}_{\mathrm{node}}(\theta)
	=
	-\frac{1}{N}\sum_{v=1}^{N}\sum_{k=1}^{K} w_v y_{vk}\log P(k\mid q_v)
\end{equation}

We set $w_v=1$ for all samples. This soft supervision preserves relative positional information in the continuous score space through anchor-based distance modeling.

\paragraph{Edge-level training}
For each sample pair $(u,v)$, let $s_u$ and $s_v$ denote their ground-truth scores. We require $|s_u-s_v|>\delta$ to ensure sufficiently reliable supervision. We further construct training pairs with a greedy matching strategy such that each sample appears in at most one pair per training epoch, which improves pair diversity. The input query is $q_{uv}=(x_u,x_v,p_{\mathrm{c}})$, where $x_u$ and $x_v$ are the contents of the two papers and $p_{\mathrm{c}}$ is the prompt for the pairwise comparison task.

We formulate pairwise comparison as a binary classification problem. The label indicating whether sample $u$ receives a higher score than sample $v$ is:
\begin{equation}
	k_{uv}=\mathbb{I}[s_u-s_v>0]
\end{equation}

The model outputs logits over two proxy class tokens:
\begin{equation}
	\mathbf{z}_{uv}=(z_{uv0},z_{uv1})\in\mathbb{R}^2
\end{equation}

The predictive distribution is defined as:
\begin{equation}
	P(k\mid q_{uv})=\frac{\exp(z_{uvk})}{\exp(z_{uv0})+\exp(z_{uv1})}
\end{equation}

Unlike node-level training, edge-level training uses a one-hot target distribution induced by the ground-truth comparison relation:
\begin{equation}
	r_{uvk}=\mathbb{I}[k=k_{uv}],  \quad y_{uvk}=r_{uvk}
\end{equation}

The edge-level objective is defined as:
\begin{equation}
	\mathcal{L}_{\mathrm{edge}}(\theta)
	=
	-\frac{1}{M}\sum_{m=1}^{M}\sum_{k=0}^{1} w_{uv} y_{uvk}\log P(k\mid q_{uv})
\end{equation}

Here, $M$ is the upper bound on the total number of valid edges, and each pair $(u,v)$ corresponds to one edge index $m$. We define the edge weight as $w_{uv}=|s_u-s_v|$. A larger score gap indicates a clearer ranking relation and thus provides a more reliable supervision signal.

\begin{figure*}[htbp]
	\setlength{\belowcaptionskip}{-5pt} 
	\setlength{\floatsep}{10pt} 
	\setlength{\textfloatsep}{10pt}
	\centering
	\includegraphics[width=\textwidth]{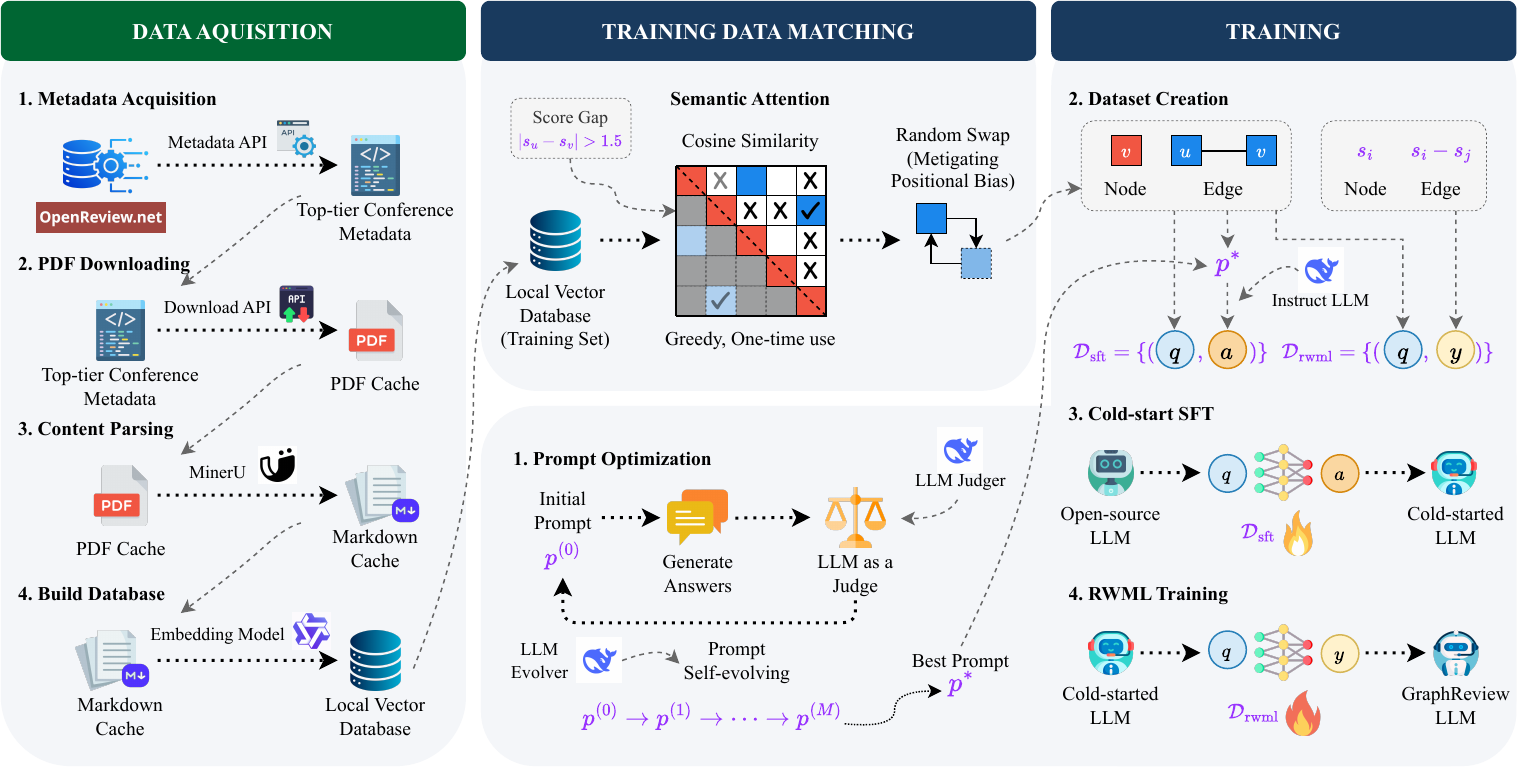}
	\caption{Pipeline of dataset construction (left) and training process (right).}
	\label{fig:train_process}
\end{figure*}

\section{Experiment Details}

\subsection{Dataset Construction}
\label{sec:dataset_construction}

As shown in Figure \ref{fig:train_process} (left), we first collect the full text of all papers through the OpenReview API, parse the PDFs with MinerU, an open-source OCR tool, and convert them into Markdown text. We then use the pre-trained semantic encoder Qwen3-8B-Embedding to generate paper embeddings and store them in a local vector database for subsequent graph construction.

We construct the primary supervised dataset using the full set of ICLR 2025 submissions, following the same split protocol as DeepReview. Specifically, the average human reviewer score serves as the ground-truth label for paper quality. The ICLR 2025 data are split into training, validation, and test sets, with approximately 8K papers for training and validation combined and 0.5K papers for testing. The training and validation sets are further divided at a ratio of 9:1.
To build the retrieval-augmented external knowledge base, we additionally collect all accepted papers from ICLR, ICML, and NeurIPS in 2023 and 2024, yielding a total of about 15K papers. This auxiliary corpus is designed to capture recent research trends and provide up-to-date background knowledge for retrieval and graph construction.

To further evaluate cross-year transferability and generalization, we additionally sample 500 papers each from ICLR 2026 and ICML 2025 for out-of-domain evaluation, while using the same external knowledge base. These evaluation sets include expert-provided paper group annotations and cover multiple quality levels.

\subsection{Training Setup and Hyperparameters}
\label{sec:training_setup_and_hyperparameters}

The model is trained in two stages. In the SFT stage, the learning rate is fixed to $2\times 10^{-4}$, the batch size is 1, and gradient accumulation spans 4 steps. In the RIML stage, we use AdamW with learning rates of $1\times 10^{-4}$, a batch size of 2, gradient accumulation over 16 steps, and a cosine learning rate scheduler. All training and inference experiments are conducted on a single RTX Pro 6000 96G GPU. The total training time is approximately 10 hours. During inference, we use vLLM for text generation. During inference-time graph evidence expansion, we set $n$, the total number of test papers, to 500. Since incorporating the entire external knowledge base would incur substantial computational cost, we set $N$ to 1000 for efficiency and randomly sample these instances across different years to validate the effectiveness of the model. The minimum improvement threshold $\epsilon$ is set to 0.01, and the maximum patience $c_{\max}$ is set to 3. The damping factor $\lambda$ in PPR is consistently set to 0.2, in accordance with the best result from the hyperparameter study. For the anchors vector $\mathbf{a}$ in node-level training, we adopt the ICLR 2025 rating scale levels $\{1, 3, 5, 6, 8, 10\}$. For the score difference threshold $\delta$ in edge-level training, the value is set to 1.5. All random seeds set to 42.

\subsection{Baselines}
\label{sec:baselines}

We reproduce a diverse set of baselines to benchmark our method from multiple perspectives. To ensure a fair evaluation, we explicitly account for potential data leakage issues, enabling a systematic comparison with both traditional and contemporary approaches.

\paragraph{Naive LLMs}

For this family of models, we directly query general-purpose LLMs via the APIs for score prediction. All samples use the same scoring prompt template, which is obtained through our proposed prompt evolution mechanism, consistent with the prompt template implemented in our scoring model. During inference, the temperature is fixed at 0, and all other parameters remain at their default values. We do not use tool calling, external retrieval, or any other auxiliary components. The model produces the final prediction in a single inference round.

\paragraph{Classic GNNs}

We implement a set of GNN baselines on the paper graph, where each node represents a submission and node features are derived from the paper text. For each paper, we concatenate the title and abstract, and encode the resulting text with Qwen3-Embedding-8B.

We build a semantic kNN graph using cosine distance in the embedding space. For each node, we connect it to its top-$k$ nearest neighbors according to cosine distance computed from the pretrained text embeddings. In all settings, we remove duplicate edges when needed and add self-loops to every node. We set $k=10$. We formulate the task as node-level regression and train all GNNs to predict the scalar target score using MSE. All experiments use a unified DGL implementation for graph construction. 

We evaluate three standard two-layer architectures: GCN, GAT, and GraphSAGE. The GCN baseline contains two graph convolution layers, with ReLU activation and dropout applied after the first layer. The GAT baseline uses an 8-head attention layer followed by a single-head output layer. The first layer uses ELU activation, and both attention dropout and feature dropout follow standard practice. The GraphSAGE baseline adopts the mean aggregator in both layers, with ReLU and dropout applied between layers. The hidden dimension is set to 128 for all models. For GAT, this dimension is evenly split across attention heads. We optimize all models with Adam using a learning rate of $1 \times 10^{-3}$, train for at most 200 epochs, and apply early stopping based on validation loss with a patience of 5 epochs. The dropout rate is 0.5 for GCN and GraphSAGE, and 0.6 for GAT, reflecting the common observation that attention-based GNNs often benefit from stronger regularization. All hyperparameters are selected through careful multi-round tuning.

\paragraph{Review Agents and Comparative Systems}

We compare multiple categories of LLM-based reviewing systems, including general agentic reviewing frameworks, specialized reviewing models released by their authors, and comparative methods that require retraining under our data split. To ensure fair comparison and reliable conclusions, we standardize the backbone model, input information, and data split whenever possible, and assess potential data leakage on a case-by-case basis.

For AIScientist, AgentReview, and PairReview, we use GPT-oss-120B as the unified backbone model. In our experiments, these systems take the full paper as input and output the final score prediction. For SEA-7B, CycleReviewer-7B, and NAIPv1-8B, we directly evaluate the open-source models released by their authors. For DeepReview-7B, DeepReview-14B, and CNPE-7B, the original authors trained these models on the ICLR 2025 dataset, and their train-test split is broadly aligned with ours. There is no overlap between the test samples used in this paper and the training data of these models. Therefore, evaluating them on our test set does not introduce data leakage. For NAIPv2-8B, the original ICLR 2025 split used in its reported experiments differs from ours. Consequently, directly using the released weights would make it difficult to ensure a fair comparison under a unified evaluation protocol. We therefore retrain this model on our split using the training code released by the authors, and report results on the same test set.

\begin{table*}[t]
    \renewcommand{\arraystretch}{0.82}
	\setlength{\belowcaptionskip}{-8pt}
	\setlength{\floatsep}{10pt}
	\setlength{\textfloatsep}{10pt}
	\centering
	\scriptsize
	\setlength{\tabcolsep}{5pt}
	\begin{tabularx}{\textwidth}{@{}
			>{\raggedright\arraybackslash}p{0.24\textwidth}
			*{6}{>{\centering\arraybackslash}X}@{}
			>{\centering\arraybackslash}p{0.14\textwidth}}
		\toprule
		& \multicolumn{3}{c}{\textbf{Decision Performance}} 
		& \multicolumn{3}{c}{\textbf{Ranking Performance}} 
		&  \\
		\cmidrule(lr){2-4} \cmidrule(lr){5-7}
		\textbf{Name} & \textbf{Accuracy} & \textbf{F1} & \textbf{AUC} & \textbf{Spearman $\rho$} & \textbf{Kendall $\tau$} & \textbf{NDCG@10} & \textbf{Avg. Performance} \\
		\midrule
		Gemini-3-Flash-Preview & \underline{0.6780} & \underline{0.6230} & \underline{0.7278} & \underline{0.5128} & \underline{0.4181} & 0.8136 & \underline{0.6289} \\
		GLM-5                  & 0.6540             & 0.5949             & 0.6737             & 0.4018             & 0.3274             & \underline{0.8144} & 0.5777             \\
		Qwen3.6-Plus           & 0.6660             & 0.6089             & 0.7174             & 0.4575             & 0.3708             & 0.6390             & 0.5766             \\
		DeepSeek-V4-Pro        & \underline{0.6780} & \underline{0.6230} & 0.7146             & 0.4046             & 0.3213             & 0.6829             & 0.5707             \\
		\textbf{GraphReview}  & \textbf{0.8980}    & \textbf{0.8806}    & \textbf{0.9590}    & \textbf{0.6626}    & \textbf{0.4808}    & \textbf{0.8547}    & \textbf{0.7893}    \\
		\bottomrule
	\end{tabularx}
	\caption{Performance of recently released LLMs. \textbf{Best} and \underline{second-best} results are highlighted.}
	\label{tab:recent_llm_performance}
\end{table*}

\subsection{Evaluation Metrics}
\label{sec:evaluation_metrics}

We evaluate model performance from two complementary perspectives: \textbf{binary decision quality} and \textbf{ranking quality}. The former assesses whether the predicted decision matches the ground-truth label, while the latter evaluates the quality of the predicted ordering, which is particularly important for downstream applications such as recommendation, retrieval, and literature intelligence.

For binary decision evaluation, we report \textbf{Accuracy}, \textbf{F1}, and \textbf{AUC}. 
For ranking evaluation, we report \textbf{Spearman's $\rho$}, \textbf{Kendall's $\tau$}, and \textbf{NDCG@10}. 

\paragraph{Accuracy} Accuracy is defined as:

\begin{equation}
    \mathrm{Accuracy}=\frac{1}{N}\sum_{i=1}^{N}\mathbb{I}[\hat{d}_i=d_i]
\end{equation}

Here, $d_i \in \{0,1\}$ and $\hat{d}_i \in \{0,1\}$ denote the ground-truth and predicted binary labels, respectively. 

\paragraph{F1} F1, namely Macro-F1, is computed by first calculating the F1 score for each class $c \in \mathcal{C}$:

\begin{equation}
    \mathrm{F1}_c = \frac{2\,\mathrm{Precision}_c\,\mathrm{Recall}_c}{\mathrm{Precision}_c+\mathrm{Recall}_c}
\end{equation}

And then averaging across classes: 

\begin{equation}
    \mathrm{F1} = \frac{1}{|\mathcal{C}|}\sum_{c\in\mathcal{C}} \mathrm{F1}_c
\end{equation}

\paragraph{AUC} AUC measures the probability that a randomly selected positive instance is assigned a higher predicted score than a randomly selected negative instance: 
\begin{equation}
    \begin{aligned}
        \mathrm{AUC} = & \frac{1}{|\mathcal{P}||\mathcal{N}|}\sum_{i\in\mathcal{P}}\sum_{j\in\mathcal{N}} \\
        & \left( \mathbb{I}[\hat{y}_i>\hat{y}_j] + \frac{1}{2}\mathbb{I}[\hat{y}_i=\hat{y}_j] \right)
    \end{aligned}
\end{equation}

Here, $\mathcal{P}$ and $\mathcal{N}$ denote the sets of positive and negative instances.

\paragraph{Spearman's $\rho$} Spearman's $\rho$ measures the correlation between the ground-truth and predicted rankings:

\begin{equation}
    \rho = 1 - \frac{6\sum_{i=1}^{N}\Delta_i^2}{N(N^2-1)} 
\end{equation}

Here, $\Delta_i$ denotes the difference between the rank of $y_i$ and that of $\hat{y}_i$. 

\paragraph{Kendall's $\tau$} Kendall's $\tau$ measures pairwise rank agreement with tie correction:

\begin{equation}
    \tau = \frac{n_c-n_d}{\sqrt{(n_0-n_x)(n_0-n_y)}} 
\end{equation}

Here, $n_c$ and $n_d$ denote the numbers of concordant and discordant pairs, $n_0 = N(N-1)/2$, and $n_x$ and $n_y$ denote the numbers of pairs tied only in the predicted and ground-truth rankings, respectively. 

\paragraph{NDCG@10} NDCG@10 evaluates the quality of the top-ranked results: 

\begin{equation}
    \mathrm{NDCG}@10 = \frac{\mathrm{DCG}@10}{\mathrm{IDCG}@10} 
\end{equation}

Specifically: 

\begin{equation}
    \mathrm{DCG}@10 = \sum_{i=1}^{10}\frac{2^{y_{(i)}}-1}{\log_2(i+1)}
\end{equation}

$y_{(i)}$ is the ground-truth relevance of the item ranked at position $i$, and $\mathrm{IDCG}@10$ is the maximum achievable DCG@10.

\section{Additional Results}
\label{sec:additional_results}

\subsection{Performance of Recent LLMs}
\label{sec:recent_llms}

In addition to the main LLM baselines, we also evaluate several recently released models. Their results are reported in Table~\ref{tab:recent_llm_performance}. Since all of these models were released after December 2025, they carry substantial leakage risk with respect to our benchmark. We therefore include them only as a reference for empirical comparison. Despite this favorable condition, their performance remains consistently below that of GraphReview. This result further indicates that stronger general-purpose LLMs alone do not close the gap in paper review, while our framework retains a clear advantage.

\providecommand{\scoreci}[3]{\shortstack{#1\\{[#2,#3]}}}

\subsection{Bootstrap Confidence Intervals}
\label{sec:bootstrap_confidence_intervals}

We report 95\% bootstrap confidence intervals for the main experimental results in Table~\ref{tab:main_bootstrap_ci}. The intervals are computed on the same test set as the main experimental table. GraphReview maintains a clear advantage over all baselines in decision, ranking, and average performance. The overall performance interval of GraphReview, is separated from that of the strongest baseline NAIPv2-8B, indicating that the improvement is robust under bootstrap resampling.

\subsection{Full Ablation Results}
\label{sec:full_ablation}

For completeness, we report the full ablation results on all evaluation metrics in Table~\ref{tab:full_ablation_results}. Although several alternatives are competitive on individual ranking metrics, the full model achieves the best overall performance, confirming that graph construction, graph propagation, and model training jointly contribute to the final system.

\FloatBarrier

\subsection{Difference Tests of Generalization}
\label{sec:difference_tests}

We also provide the detailed statistical test results for the generalization analysis in Table~\ref{tab:diff_test_results}. Since rankings do not naturally follow a normal distribution, non-parametric tests are used. For both conferences, the Kruskal-Wallis test shows significant overall differences across decision categories. The pairwise Mann-Whitney U tests are also significant for all category pairs. The result pattern aligns with practical review intuition: accepted and rejected papers usually differ substantially in quality, whereas accepted papers have already reached a relatively high standard, making within-group differences less pronounced. These results provide additional evidence that the predicted ranking scores align well with real review outcomes and remain discriminative across different decision levels.

\begin{table*}[t]
	\renewcommand{\arraystretch}{0.82}
	\setlength{\belowcaptionskip}{-8pt}
	\setlength{\floatsep}{10pt}
	\setlength{\textfloatsep}{10pt}
	\setlength{\aboverulesep}{1.5pt}
	\setlength{\belowrulesep}{1.5pt}
	\centering
	\scriptsize
	\setlength{\tabcolsep}{1.5pt}
	\begin{tabularx}{\textwidth}{@{}
			>{\raggedright\arraybackslash}p{0.16\textwidth}
			*{6}{>{\centering\arraybackslash}X}@{}
			>{\centering\arraybackslash}p{0.12\textwidth}}
		\toprule
		& \multicolumn{3}{c}{\textbf{Decision Performance}}
		& \multicolumn{3}{c}{\textbf{Ranking Performance}}
		& \\
		\cmidrule(lr){2-4} \cmidrule(lr){5-7}
		\textbf{Method} & \textbf{Accuracy} & \textbf{F1} & \textbf{AUC} & \textbf{Spearman $\rho$} & \textbf{Kendall $\tau$} & \textbf{NDCG@10} & \textbf{Avg. Performance} \\
		\specialrule{\lightrulewidth}{0.5pt}{0.5pt}
		\rowcolor{gray!10}
		\multicolumn{8}{@{}l}{\textbf{Classic GNNs}} \\
		GCN & \scoreci{0.5980}{0.5560}{0.6380} & \scoreci{0.5404}{0.4943}{0.5807} & \scoreci{0.5720}{0.5194}{0.6203} & \scoreci{0.1267}{0.0352}{0.2060} & \scoreci{0.0852}{0.0231}{0.1398} & \scoreci{0.5552}{0.4767}{0.6441} & \scoreci{0.4129}{0.3508}{0.4715} \\
		GAT & \scoreci{0.5760}{0.5340}{0.6200} & \scoreci{0.5235}{0.4798}{0.5684} & \scoreci{0.5104}{0.4562}{0.5627} & \scoreci{0.0485}{-0.0395}{0.1359} & \scoreci{0.0349}{-0.0255}{0.0953} & \scoreci{0.5734}{0.4595}{0.6728} & \scoreci{0.3778}{0.3108}{0.4425} \\
		GraphSAGE & \scoreci{0.5640}{0.5240}{0.6080} & \scoreci{0.5143}{0.4697}{0.5600} & \scoreci{0.5055}{0.4522}{0.5588} & \scoreci{0.0492}{-0.0364}{0.1420} & \scoreci{0.0348}{-0.0239}{0.1004} & \scoreci{0.5868}{0.4739}{0.7175} & \scoreci{0.3758}{0.3099}{0.4478} \\
		\specialrule{\lightrulewidth}{0.5pt}{0.5pt}
		\rowcolor{gray!10}
		\multicolumn{8}{@{}l}{\textbf{Naïve LLMs}} \\
		GPT-5-Mini & \scoreci{0.7100}{0.6740}{0.7520} & \scoreci{0.6605}{0.6186}{0.7055} & \scoreci{0.7104}{0.6661}{0.7537} & \scoreci{0.4136}{0.3406}{0.4889} & \underline{\scoreci{0.3317}{0.2724}{0.3954}} & \scoreci{0.6982}{0.5969}{0.8109} & \scoreci{0.5874}{0.5281}{0.6511} \\
		Gemini-2.5-Flash & \scoreci{0.6060}{0.5620}{0.6500} & \scoreci{0.5387}{0.4942}{0.5832} & \scoreci{0.5557}{0.5194}{0.5933} & \scoreci{0.1936}{0.0995}{0.2724} & \scoreci{0.1569}{0.0813}{0.2190} & \scoreci{0.6422}{0.5468}{0.7368} & \scoreci{0.4488}{0.3838}{0.5091} \\
		DeepSeek-V3.2 & \scoreci{0.6180}{0.5760}{0.6600} & \scoreci{0.5527}{0.5070}{0.5967} & \scoreci{0.5975}{0.5749}{0.6212} & \scoreci{0.3536}{0.2817}{0.4242} & \scoreci{0.2908}{0.2319}{0.3496} & \scoreci{0.6614}{0.5407}{0.7618} & \scoreci{0.5123}{0.4520}{0.5689} \\
		\specialrule{\lightrulewidth}{0.5pt}{0.5pt}
		\rowcolor{gray!10}
		\multicolumn{8}{@{}l}{\textbf{Review Agents}} \\
		AgentReview & \scoreci{0.5160}{0.4760}{0.5601} & \scoreci{0.5074}{0.4666}{0.5511} & \scoreci{0.5528}{0.5031}{0.6050} & \scoreci{0.0042}{-0.0759}{0.0889} & \scoreci{0.0026}{-0.0591}{0.0673} & \scoreci{0.6411}{0.5133}{0.7722} & \scoreci{0.3707}{0.3040}{0.4408} \\
		AIScientist & \scoreci{0.7100}{0.6700}{0.7500} & \scoreci{0.5417}{0.4950}{0.5890} & \scoreci{0.6418}{0.5936}{0.6885} & \scoreci{0.3162}{0.2416}{0.3973} & \scoreci{0.2508}{0.1919}{0.3168} & \scoreci{0.7493}{0.5853}{0.8507} & \scoreci{0.5350}{0.4629}{0.5987} \\
		SEA-7B & \scoreci{0.6960}{0.6560}{0.7360} & \scoreci{0.4104}{0.3961}{0.4240} & \scoreci{0.5459}{0.4934}{0.5951} & \scoreci{0.0983}{0.0108}{0.1878} & \scoreci{0.0754}{0.0083}{0.1449} & \scoreci{0.6585}{0.5317}{0.7196} & \scoreci{0.4141}{0.3494}{0.4679} \\
		CycleReviewer-7B & \scoreci{0.6620}{0.6200}{0.7040} & \scoreci{0.5300}{0.4835}{0.5748} & \scoreci{0.6766}{0.6244}{0.7259} & \scoreci{0.3132}{0.2362}{0.3855} & \scoreci{0.2255}{0.1696}{0.2808} & \scoreci{0.6859}{0.5856}{0.7795} & \scoreci{0.5155}{0.4532}{0.5751} \\
		DeepReview-7B & \scoreci{0.6540}{0.6100}{0.6960} & \scoreci{0.5470}{0.5010}{0.5899} & \scoreci{0.5890}{0.5342}{0.6413} & \scoreci{0.2928}{0.2052}{0.3736} & \scoreci{0.2110}{0.1477}{0.2719} & \scoreci{0.6938}{0.6360}{0.7880} & \scoreci{0.4979}{0.4390}{0.5601} \\
		DeepReview-14B & \scoreci{0.6860}{0.6440}{0.7240} & \scoreci{0.6240}{0.5816}{0.6657} & \scoreci{0.6494}{0.6009}{0.6959} & \scoreci{0.3995}{0.3172}{0.4772} & \scoreci{0.2991}{0.2362}{0.3610} & \scoreci{0.6657}{0.5903}{0.7578} & \scoreci{0.5540}{0.4950}{0.6136} \\
		\specialrule{\lightrulewidth}{0.5pt}{0.5pt}
		\rowcolor{gray!10}
		\multicolumn{8}{@{}l}{\textbf{Comparative Systems}} \\
		PairReview & \scoreci{0.6700}{0.6300}{0.7120} & \scoreci{0.5701}{0.5267}{0.6208} & \scoreci{0.6047}{0.5526}{0.6587} & \scoreci{0.2585}{0.1750}{0.3428} & \scoreci{0.1781}{0.1201}{0.2383} & \scoreci{0.6644}{0.5756}{0.7442} & \scoreci{0.4910}{0.4300}{0.5528} \\
		CNPE-7B & \scoreci{0.7200}{0.6800}{0.7580} & \scoreci{0.6692}{0.6229}{0.7091} & \scoreci{0.7363}{0.6855}{0.7825} & \scoreci{0.3995}{0.3172}{0.4763} & \scoreci{0.2774}{0.2197}{0.3344} & \underline{\scoreci{0.8040}{0.7133}{0.8991}} & \scoreci{0.6010}{0.5398}{0.6599} \\
		NAIP-8B & \scoreci{0.6140}{0.5740}{0.6560} & \scoreci{0.5413}{0.4961}{0.5842} & \scoreci{0.5641}{0.5146}{0.6189} & \scoreci{0.1585}{0.0757}{0.2469} & \scoreci{0.1074}{0.0498}{0.1700} & \scoreci{0.6882}{0.5896}{0.7812} & \scoreci{0.4456}{0.3833}{0.5095} \\
		NAIPv2-8B & \underline{\scoreci{0.7260}{0.6880}{0.7640}} & \underline{\scoreci{0.6780}{0.6336}{0.7211}} & \underline{\scoreci{0.7627}{0.7144}{0.8044}} & \underline{\scoreci{0.4205}{0.3443}{0.4941}} & \scoreci{0.2928}{0.2392}{0.3480} & \scoreci{0.7723}{0.6975}{0.8720} & \underline{\scoreci{0.6087}{0.5528}{0.6673}} \\
		\specialrule{\lightrulewidth}{0.5pt}{0.5pt}
		\rowcolor{gray!10}
		\multicolumn{8}{@{}l}{\textbf{Ours}} \\
		GraphReview & \textbf{\scoreci{0.8980}{0.8720}{0.9240}}$^{*}$ & \textbf{\scoreci{0.8806}{0.8514}{0.9108}}$^{*}$ & \textbf{\scoreci{0.9590}{0.9424}{0.9747}}$^{*}$ & \textbf{\scoreci{0.6626}{0.6092}{0.7108}}$^{*}$ & \textbf{\scoreci{0.4808}{0.4351}{0.5237}}$^{*}$ & 
		\textbf{\scoreci{0.8547}{0.7756}{0.9397}} & 
		\textbf{\scoreci{0.7893}{0.7476}{0.8306}}$^{*}$ \\
		\bottomrule
	\end{tabularx}
	\caption{Main results with 95\% bootstrap confidence intervals. $^{*}$ denotes that GraphReview is significantly better than the second-best method under a paired bootstrap test with $p < 0.001$. \textbf{Best} and \underline{second-best} point estimates are highlighted.}
	\label{tab:main_bootstrap_ci}
\end{table*}

\begin{table*}[t]
	\renewcommand{\arraystretch}{0.82}
	\setlength{\belowcaptionskip}{-8pt}
	\setlength{\floatsep}{10pt}
	\setlength{\textfloatsep}{10pt}
	\setlength{\aboverulesep}{1.5pt}
	\setlength{\belowrulesep}{1.5pt}
	\centering
	\scriptsize
	\setlength{\tabcolsep}{5pt}
	\begin{tabularx}{\textwidth}{@{}
			>{\raggedright\arraybackslash}p{0.20\textwidth}
			*{6}{>{\centering\arraybackslash}X}@{}
			>{\centering\arraybackslash}p{0.14\textwidth}}
		\toprule
		& \multicolumn{3}{c}{\textbf{Decision Performance}}
		& \multicolumn{3}{c}{\textbf{Ranking Performance}}
		& \\
		\cmidrule(lr){2-4} \cmidrule(lr){5-7}
		\textbf{Variant} & \textbf{Accuracy} & \textbf{F1} & \textbf{AUC} & \textbf{Spearman $\rho$} & \textbf{Kendall $\tau$} & \textbf{NDCG@10} & \textbf{Avg. Performance} \\
		\specialrule{\lightrulewidth}{0.5pt}{0.5pt}
		\rowcolor{gray!10}
		\multicolumn{8}{@{}l}{\textbf{Information Sources}} \\
		\textcolor{red}{w/o } Graph & 0.7580 & 0.7167 & 0.8174 & 0.5529 & 0.3948 & 0.8032 & 0.6738 \\
		\textcolor{red}{w/o } Intrinsic Quality & 0.8820 & 0.8618 & 0.9497 & 0.5903 & 0.4243 & 0.8236 & 0.7553 \\
		\textcolor{red}{w/o } Synchronic Links & 0.8820 & 0.8618 & 0.9315 & \underline{0.6632} & \textbf{0.4823} & \underline{0.8514} & \underline{0.7787} \\
		\textcolor{red}{w/o } Diachronic Links & 0.8820 & 0.8618 & 0.9498 & 0.6541 & 0.4730 & 0.7745 & 0.7659 \\
		\specialrule{\lightrulewidth}{0.5pt}{0.5pt}
		\rowcolor{gray!10}
		\multicolumn{8}{@{}l}{\textbf{Training Strategies}} \\
		\textcolor{red}{w/o } RWML & 0.8500 & 0.8244 & 0.9068 & 0.5857 & 0.4187 & 0.7665 & 0.7253 \\
		\textcolor{red}{w/o } SFT & 0.8500 & 0.8244 & 0.9073 & 0.6036 & 0.4331 & 0.7835 & 0.7337 \\
		\specialrule{\lightrulewidth}{0.5pt}{0.5pt}
		\rowcolor{gray!10}
		\multicolumn{8}{@{}l}{\textbf{Graph Construction}} \\
		Random Connection & 0.8620 & 0.8384 & 0.9189 & 0.6447 & 0.4642 & 0.8077 & 0.7560 \\
		\specialrule{\lightrulewidth}{0.5pt}{0.5pt}
		\rowcolor{gray!10}
		\multicolumn{8}{@{}l}{\textbf{Aggregation Alternatives}} \\
		Naïve Win-Rate Averaging & 0.8820 & 0.8618 & \underline{0.9503} & 0.6422 & 0.4619 & 0.8342 & 0.7721 \\
		Bradley-Terry & \underline{0.8900} & \underline{0.8712} & 0.9478 & \textbf{0.6652} & 0.4785 & 0.8082 & 0.7768 \\
		Borda Count & 0.8460 & 0.8197 & 0.9097 & 0.6298 & 0.4527 & 0.8246 & 0.7471 \\
		\specialrule{\lightrulewidth}{0.5pt}{0.5pt}
		\rowcolor{gray!10}
		\multicolumn{8}{@{}l}{\textbf{Full Model}} \\
		GraphReview & \textbf{0.8980} & \textbf{0.8806} & \textbf{0.9590} & 0.6626 & \underline{0.4808} & \textbf{0.8547} & \textbf{0.7893} \\
		\bottomrule
	\end{tabularx}
	\caption{Full ablation results with detailed decision and ranking metrics. \textbf{Best} and \underline{second-best} results in each metric are highlighted.}
	\label{tab:full_ablation_results}
\end{table*}

\begin{table*}[t]
	\renewcommand{\arraystretch}{0.82}
	\setlength{\belowcaptionskip}{-8pt}
	\setlength{\floatsep}{10pt}
	\setlength{\textfloatsep}{10pt}
	\centering
	\scriptsize
	\setlength{\tabcolsep}{4pt}
	\begin{tabularx}{\textwidth}{@{}
			>{\raggedright\arraybackslash}p{0.12\textwidth}
			>{\centering\arraybackslash}p{0.08\textwidth}
			>{\centering\arraybackslash}p{0.08\textwidth}
			>{\centering\arraybackslash}X
			>{\centering\arraybackslash}X
			>{\centering\arraybackslash}X
			>{\centering\arraybackslash}X
			>{\centering\arraybackslash}X
			>{\centering\arraybackslash}X
			@{}}
		\toprule
		& \multicolumn{2}{c}{\textbf{Kruskal-Wallis Test}}
		& \multicolumn{6}{c}{\textbf{Pairwise Mann-Whitney U Test}} \\
		\cmidrule(lr){2-3} \cmidrule(lr){4-9}
		\textbf{Conference}
		& \multicolumn{2}{c}{}
		& \multicolumn{2}{c}{Reject \textcolor{gray}{vs} Poster}
		& \multicolumn{2}{c}{Reject \textcolor{gray}{vs} Spotlight/Oral}
		& \multicolumn{2}{c}{Poster \textcolor{gray}{vs} Spotlight/Oral} \\
		\cmidrule(lr){4-5} \cmidrule(lr){6-7} \cmidrule(lr){8-9}
		& $H$\textbf{-statistic} & $p$\textbf{-value}
		& $U$\textbf{-statistic} & $p$\textbf{-value}
		& $U$\textbf{-statistic} & $p$\textbf{-value}
		& $U$\textbf{-statistic} & $p$\textbf{-value} \\
		\midrule
		ICLR 2026
		& 273.0119
		& 0.000$^{*}$
		& 2086.0 & 0.000$^{*}$
		& 962.0 & 0.000$^{*}$
		& 10460.0 & 0.000$^{*}$ \\
		ICML 2025
		& 157.2580
		& 0.000$^{*}$
		& 7182.0 & 0.000$^{*}$
		& 1927.0 & 0.000$^{*}$
		& 9247.0 & 0.000$^{*}$ \\
		\bottomrule
	\end{tabularx}
	\caption{Results of Kruskal-Wallis and pairwise Mann-Whitney U tests across decision categories. $^{*}$ denotes statistical significance with $p < 0.001$.}
	\label{tab:diff_test_results}
\end{table*}

\section{Proof}
\label{sec:proof}
\newtheorem{theorem}{Theorem}

\begin{theorem}
	For every integer $N \ge 3$, the complete graph $K_N$ contains a 2-factor.
\end{theorem}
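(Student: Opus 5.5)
The plan is to exhibit an explicit 2-factor, namely a Hamiltonian cycle. Recall that a 2-factor of a graph $G$ is a spanning subgraph in which every vertex has degree exactly $2$. Equivalently, it is a set of vertex-disjoint cycles covering all vertices. It therefore suffices to produce a single cycle through all $N$ vertices of $K_N$.

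First, label the vertices of $K_N$ as $1,2,\dots,N$. Then consider the edge set
\begin{equation*}
E_H=\{\{i,i+1\} : 1\le i\le N-1\}\cup\{\{N,1\}\}.
\end{equation*}
Since $K_N$ contains every pair of distinct vertices, each of these pairs is an edge of $K_N$, so $H=(\{1,\dots,N\},E_H)$ is a spanning subgraph.

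Next, verify the degree condition. Vertex $i$ with $2\le i\le N-1$ is incident exactly to $\{i-1,i\}$ and $\{i,i+1\}$. Vertex $1$ is incident to $\{1,2\}$ and $\{N,1\}$, and vertex $N$ is incident to $\{N-1,N\}$ and $\{N,1\}$.

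The only point needing care, which is the main (mild) obstacle, is to ensure that these edges are distinct and that no loops occur. This is exactly where $N\ge 3$ is used. When $N\ge3$, the closing edge $\{N,1\}$ differs from $\{1,2\}$ because $N\neq 2$, and it differs from $\{N-1,N\}$ because $N-1\neq 1$. The consecutive edges are pairwise distinct by construction. Hence every vertex has degree exactly $2$ in the simple graph $H$.

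By contrast, for $N=2$ the closing edge coincides with $\{1,2\}$, and for $N=1$ there are no edges at all. This explains why the hypothesis is necessary.

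Finally, conclude that $H$ is a 2-factor of $K_N$. In the matrix language of Algorithm~\ref{alg:2factor}, setting $\Delta c_{uv}=1$ exactly for $\{u,v\}\in E_H$ gives a binary symmetric matrix with all row sums equal to $2$. This feasible point shows that the first S2FM step, with $\mathbf{C}^{(0)}=\mathbf{0}$, is well posed, which is the use made of the theorem in property~3.
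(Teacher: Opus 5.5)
Your proposal is correct and takes essentially the same approach as the paper: both exhibit the Hamiltonian cycle through the vertices in label order and use $N\ge 3$ only to ensure that each vertex's two incident cycle edges are distinct. Your closing remark, which turns the cycle into a feasible $\Delta\mathbf{C}$ for the first S2FM step, is a correct addition that does not affect the argument.
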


\begin{proof} 
	Let $V(K_N)=\{v_0,v_1,\ldots,v_{N-1}\}$. Consider the cycle $C=(v_0,v_1,v_2,\ldots,v_{N-1},v_0)$. Since $K_N$ is complete, it contains every edge of the form $\{v_i,v_{i+1 \bmod N}\}$ for $i\in\{0,1,\ldots,N-1\}$, so $C$ is a well-defined subgraph of $K_N$. Moreover, $C$ is spanning because it contains every node of $K_N$. For each node $v_i$, the two incident edges in $C$ are $\{v_i,v_{i-1 \bmod N}\}$ and $\{v_i,v_{i+1 \bmod N}\}$, and these two edges are distinct since $N\geq 3$. Hence every node has degree exactly $2$ in $C$. Therefore, $C$ is a $2$-factor of $K_N$.
\end{proof}

\section{Graph Evidence Expansion as Paradigm}

\begin{table*}[t]
    \renewcommand{\arraystretch}{0.82}
	\setlength{\belowcaptionskip}{-8pt}
	\setlength{\floatsep}{10pt}
	\setlength{\textfloatsep}{10pt}
	\centering
	\scriptsize
	\setlength{\tabcolsep}{5pt}
	\begin{tabularx}{\textwidth}{@{}
			>{\raggedright\arraybackslash}p{0.20\textwidth}
			*{6}{>{\centering\arraybackslash}X}@{}
			>{\centering\arraybackslash}p{0.14\textwidth}}
		\toprule
		& \multicolumn{3}{c}{\textbf{Decision Performance}} 
		& \multicolumn{3}{c}{\textbf{Ranking Performance}} 
		&  \\
		\cmidrule(lr){2-4} \cmidrule(lr){5-7}
		\textbf{Variant} & \textbf{Accuracy} & \textbf{F1} & \textbf{AUC} & \textbf{Spearman $\rho$} & \textbf{Kendall $\tau$} & \textbf{NDCG@10} & \textbf{Avg. Performance} \\
		\midrule
		CNPE-7B        & \underline{0.7180} & \underline{0.6698} & \underline{0.7354} & 0.3950 & 0.2753 & \underline{0.7752} & \underline{0.5948} \\
		DeepReview-14B & 0.6860             & 0.6240             & 0.6494             & \underline{0.3995} & \underline{0.2991} & 0.6657             & 0.5540             \\
		\textbf{Combined} & \textbf{0.7300} & \textbf{0.6839} & \textbf{0.7388} & \textbf{0.4604} & \textbf{0.3250} & \textbf{0.7838} & \textbf{0.6203} \\
		\bottomrule
	\end{tabularx}
	\caption{Performance of graph-based fusion with CNPE-7B and DeepReview-14B. \textbf{Best} and \underline{second-best} results are highlighted.}
	\label{tab:combined_cnpe_deepreview}
\end{table*}

\begin{table*}[t]
    \renewcommand{\arraystretch}{0.82}
	\setlength{\belowcaptionskip}{-8pt}
	\setlength{\floatsep}{10pt}
	\setlength{\textfloatsep}{10pt}
	\centering
	\scriptsize
	\setlength{\tabcolsep}{5pt}
	\begin{tabularx}{\textwidth}{@{}
			>{\raggedright\arraybackslash}p{0.20\textwidth}
			*{6}{>{\centering\arraybackslash}X}@{}
			>{\centering\arraybackslash}p{0.14\textwidth}}
		\toprule
		& \multicolumn{3}{c}{\textbf{Decision Performance}} 
		& \multicolumn{3}{c}{\textbf{Ranking Performance}} 
		&  \\
		\cmidrule(lr){2-4} \cmidrule(lr){5-7}
		\textbf{Variant} & \textbf{Accuracy} & \textbf{F1} & \textbf{AUC} & \textbf{Spearman $\rho$} & \textbf{Kendall $\tau$} & \textbf{NDCG@10} & \textbf{Avg. Performance} \\
		\midrule
		PairReview       & 0.6380             & \underline{0.5762} & 0.6080             & 0.2440             & 0.1670             & \underline{0.6843} & 0.4863             \\
		CycleReviewer-7B & \underline{0.6620} & 0.5300             & \underline{0.6766} & \underline{0.3132} & \underline{0.2255} & \textbf{0.6859}    & \underline{0.5155} \\
		\textbf{Combined} & \textbf{0.6780} & \textbf{0.6230} & \textbf{0.6794} & \textbf{0.3450} & \textbf{0.2359} & 0.6826 & \textbf{0.5407} \\
		\bottomrule
	\end{tabularx}
	\caption{Performance of graph-based fusion with PairReview and CycleReviewer-7B. \textbf{Best} and \underline{second-best} results are highlighted.}
	\label{tab:combined_pairreview_cyclereviewer}
\end{table*}

Graph-based LLM paper review can be developed into a complete and independent evidence-aggregation paradigm. GraphReview is not only a standalone method, but also a unified abstraction for a broader class of approaches that organize pointwise and pairwise review signals in a common graph structure. This abstraction is distinct from representation-learning GNNs: the graph is used as a symbolic structure for inference-time evidence expansion and global ranking aggregation. To support this view, we revisit mainstream methods through the lens of optimization objectives and show that their core ideas can be naturally unified within a graph-based framework. In particular, most existing LLM-based paper review methods can be characterized by two classic objectives.

\paragraph{Pointwise Objective} The first line of work emphasizes accurate scoring of individual papers and adopts a pointwise objective:
\begin{equation}
	\phi_{\mathrm{s}}^{*}
	=
	\arg\min_{\phi_{\mathrm{s}}}
	\sum_{x_v \in \mathbf{x}}
	\mathcal{L}\bigl(
	f_{\mathrm{s}}(x_v; \phi_{\mathrm{s}}),
	s_v
	\bigr)
\end{equation}

Here, $f_{\mathrm{s}}$ denotes a scoring function for estimating the quality of a single paper, with the objective of aligning the predicted score with the ground-truth label.

\paragraph{Pairwise Objective} The second line of work focuses on comparison and discrimination and adopts a pairwise objective:
\begin{equation}
	\phi_{\mathrm{c}}^{*}
	=
	\arg\min_{\phi_{\mathrm{c}}}
	\sum_{x_u, x_v \in \mathbf{x}}
	\mathcal{L}\bigl(
	f_{\mathrm{c}}(x_v, x_u; \phi_{\mathrm{c}}),
	s_u,
	s_v
	\bigr)
\end{equation}

Here, $f_{\mathrm{c}}$ denotes a comparison function over paper pairs, whose goal is to capture relative preference or comparative quality.

\paragraph{Combined Objective} From a graph perspective, pointwise and pairwise estimation are not separate formulations, but complementary signals that can be organized and globally integrated within a common evidence graph. We therefore write the combined optimization objective as:
\begin{equation}
	\begin{aligned}
		\Phi^{*}
		&=
		\arg\min_{\Phi}
		\mathcal{L} \\
		&\quad
		\Bigl(\bigl(
		f_{\mathrm{agg}}
		\circ
		\bigl(
		(f_{\mathrm{c}} \circ f_{\mathrm{att}})
		\oplus
		f_{\mathrm{s}}
		\bigr)
		\bigr)
		(\mathbf{x}; \Phi),
		\mathbf{r}^{\mathrm{gt}}
		\Bigr)
	\end{aligned}
\end{equation}

Here, $f_{\mathrm{att}}$ defines the graph construction strategy by selecting informative paper pairs, $f_{\mathrm{c}}$ and $f_{\mathrm{s}}$ generate edge-level and node-level signals, respectively, and $f_{\mathrm{agg}}$ aggregates these signals over the graph to produce the final prediction aligned with the ground-truth ranking $\mathbf{r}^{\mathrm{gt}}$.

Specifically, in GraphReview, $f_{\mathrm{att}}$ is instantiated as Sequential 2-Factor Matching, which selects informative comparison edges with low overhead. The aggregation function $f_{\mathrm{agg}}$ is instantiated as Personalized PageRank, which computes a stationary ranking over the fixed directed preference graph while incorporating the pointwise prior. The functions $f_{\mathrm{c}}$ and $f_{\mathrm{s}}$ correspond to the pairwise comparison model and the pointwise scoring model obtained from two-stage training.

\paragraph{Modular Decoupling} A key property of this framework is modular decoupling. GraphReview does not depend on any fixed backend model; rather, it lies in offering a graph-based formulation that can uniformly integrate heterogeneous review signals. In principle, any mainstream LLM or existing paper review method can serve as a backend. Its pointwise or pairwise predictions can be embedded into the graph and then combined through graph aggregation to produce a unified global ranking.

\paragraph{Experiments} To validate the effectiveness of this generalized graph evidence-expansion paradigm, we further replace the representative pair selection and comparison component $f_{\mathrm{c}} \circ f_{\mathrm{att}}$ and the direct scoring component $f_{\mathrm{s}}$ with several classic methods, so that they provide edge-level and node-level review signals, respectively. We keep Personalized PageRank as $f_{\mathrm{agg}}$.

We conduct ablation studies over different module combinations. Specifically, we evaluate two representative settings: (1) CNPE-7B \cite{zheng2026isolated} combined with DeepReview-14B \cite{zhu2025deepreview}, and (2) PairReview \cite{zhang2025from} combined with CycleReviewer-7B \cite{weng2025cycleresearcher}. 

The results reported in Table \ref{tab:combined_cnpe_deepreview} and \ref{tab:combined_pairreview_cyclereviewer} show that graph-based fusion consistently integrates heterogeneous review signals and outperforms each individual method on most metrics. For the combination of CNPE-7B and DeepReview-14B, the fused model achieves the best performance on all six metrics, yielding an average relative improvement of approximately 4.3\% over the strongest standalone baseline, CNPE-7B. For the combination of PairReview and CycleReviewer-7B, the fused model also performs best on most metrics and improves the average performance by approximately 4.9\% relative to the strongest single model, CycleReviewer-7B. By integrating diverse approaches within a graph evidence structure, this paradigm captures substantial complementary information and further improves overall performance. 

Overall, these results suggest that graph-based LLM paper review could be viewed not only as a specific method, but also as a general inference-time evidence-expansion paradigm for paper evaluation that unifies the two dominant formulations of pointwise and pairwise estimation with strong scalability and compatibility.

\section{Methodological Design Details}

\subsection{Beyond Classic GNNs}

Classic GNNs \cite{kipf2016semi,velivckovic2017graph,gilmer2017neural,hamilton2017inductive} treat text embeddings as node features and then learn node representations through iterative neighborhood aggregation. However, our experiments show that such models perform poorly on paper reviewing and fall substantially behind GraphReview. The two paradigms both use graph structure, but GraphReview is not a representation-learning GNN: it elicits task-specific pointwise and pairwise evidence from original paper texts and aggregates the resulting fixed preference graph numerically. This distinction motivates the following analysis of why conventional GNN operators are unsuitable for the task.

We answer this question through a mechanistic analysis. Using GraphSAGE as a representative example, we argue that this family of methods incurs substantial information loss throughout the pipeline and largely undermines the structural clarity and interpretability of the graph. The central issue is that its generic embedding transmission, neighborhood pooling, and latent state mixing operators are not designed for evaluation tasks that require fine-grained semantic understanding and strong alignment with the downstream objective. 

\paragraph{Generic Embedding Transmission}
When edges do not carry explicit semantic features or types, message passing is insensitive to the task-specific relations. Prior methods propagate information mainly according to graph topology. For example, GraphSAGE transmits embeddings with the same generic rule regardless of the specific relation between two papers. In practice, however, paper-paper relations are grounded in meaningful scholarly contexts. Papers from different periods may reflect inheritance or intellectual influence, while contemporaneous papers may represent direct competition. Simply passing information between neighboring nodes cannot capture these distinctions, because the propagation process discards the task-relevant context that gives each relation its meaning. 

\paragraph{Neighborhood Pooling}
Existing aggregation functions compress neighborhood information through permutation-invariant operations, but this compression distorts the actual contribution of neighbors. GraphSAGE uses mean, sum, or max pooling to aggregate incoming messages. As a result, semantically unrelated or even contradictory signals are merged into a single representation, and differences in neighbor importance are flattened by averaging or pooling. This can dilute neighbor-specific evidence, especially under heterophily and suppress higher-order semantic cues, such as argumentative stance and strength of evidence. Such behavior is fundamentally misaligned with our goal of identifying truly contributive papers.

\paragraph{Latent State Mixing}
The update function further mixes intrinsic textual information with semantically heterogeneous external signals, even though these two sources represent different forms of evidence. In paper reviewing, node information reflects the intrinsic quality of the manuscript, whereas edge information reflects the additional value introduced by its domain-level relations to other papers. Traditional GraphSAGE-style methods concatenate these signals into a single embedding and process them jointly in a black-box fashion. This design not only fails to explicitly preserve relation-specific evidence, which is crucial for reviewing, but also weakens attribution, which is critical in the reviewing process. \

Classical GNNs can be effective for relatively shallow tasks such as node classification and link prediction, but they are ill-suited for paper reviewing. This task requires deep semantic understanding, rigorous reasoning, and careful assessment of novelty. Under such demands, conventional GNNs exhibit clear and fundamental limitations.

By addressing the limitations of traditional GNN architectures for paper reviewing, GraphReview establishes a graph-structured evidence framework tailored to this task. Instead of recursively transmitting latent text embeddings, it uses LLMs to elicit explicit comparative evidence on S2FM-selected pairs. It preserves neighbor-specific textual rationales for consolidation and separately converts pairwise outcomes into a directed preference graph. PPR then combines this graph with pointwise priors to obtain globally coupled and attributable predictions. These design choices collectively allow GraphReview to substantially outperform conventional GNN baselines.

\subsection{Beyond LLM Agents}

A natural question is: \textit{why not build a conventional agent-based reviewer, but instead rely on a structured external graph?} Answering this question also clarifies the main challenge addressed in this paper.

At the core of this issue is the fact that paper review is not a closed task, but a process whose judgments become clearer through interaction with an open scientific environment. Classical LLM agent architectures \cite{lu2024ai, weng2025cycleresearcher} typically treat paper review as a task to be solved through decomposition, iterative reflection, or retrieval-augmented analysis. Although retrieved evidence may provide useful background knowledge, it is usually incorporated only as auxiliary context \cite{zhu2025deepreview} rather than as a structured signal that directly shapes evaluation. As a result, these methods often fail to capture the factors that truly determine review quality, such as methodological novelty, the relationship between a paper and prior work, and the temporal lineage of ideas across the literature.

This limitation reflects a deeper mismatch between conventional agent architectures and the nature of the review task. General-purpose agents are designed for broad language tasks through iterative textual reasoning. In contrast, paper review requires reasoning over the external scientific context of a paper, including citation dependencies, conceptual genealogies, and cross-paper comparisons. Treating review as an isolated text understanding problem therefore ignores the critical role of these external signals.

Our method is based on a different perspective. We explicitly model the scientific context as a graph of review evidence and integrate local judgments through sparse pairwise comparison and PPR-based ranking aggregation. This design enables the system to interact with structured knowledge beyond the target paper and to combine review-relevant evidence in a task-aligned manner. We argue that it is precisely this capability, rather than more elaborate in-text reasoning alone, that allows our method to achieve more reliable and accurate paper evaluation.

\subsection{Training Objectives}

The training objective for the LLMs should ideally satisfy two requirements. First, it must capture review signals at both levels: the score distribution over nodes and the binary preference relation over edges. Second, it should retain the semantic evidence associated with expert judgment, namely the evidence supporting each judgment. Direct regression on review scores is insufficient for both purposes. It does not align well with the target distribution and may fail to retain the semantic evidence contained in expert reviews. To address this limitation, we adopt a two-stage training framework.

Paper evaluation is better characterized as a preference comparison problem rather than a long-horizon reasoning task such as mathematical deduction. In peer review, judgments are often formed by directly comparing how two papers present the same aspect, such as clarity, novelty, or empirical support, at corresponding textual locations. This process relies more on evidence localization and comparative attention over text than on repeated Markov state transitions or search over a reasoning tree. Therefore, RLVR methods such as GRPO are not a natural fit for this setting.
This mismatch is both methodological and computational. RLVR-based approaches are typically expensive to train, and their reward signals are often sparse, which may lead to unstable optimization. Moreover, even with reward supervision, large language models can produce reasoning traces that appear plausible without being faithful. Sparse rewards may further exacerbate this issue. By contrast, our RIML training strategy is more naturally aligned with preference optimization methods, whose objective is to match human comparative judgments rather than solve a multi-step reasoning problem.

Once the task is framed as preference alignment, the central question becomes what should be aligned. Supervising only the final decision may be insufficient, because the relevant signal does not reside solely in the conclusion. If training targets only the final outcome, the model may learn to imitate answer patterns without internalizing the underlying criteria. Therefore, we distill the global explanatory rationale behind a preference judgment into the earliest stage of generation. We first generate the explanation supporting the preference from the teacher model, and then use this distilled signal to fine-tune an open-source student model. After training, the model's initial token can implicitly encode a compact representation of the subsequent explanatory reason, helping the model reach preference decisions without relying on lengthy explicit reasoning at inference time. Our ablation results show that directly applying preference optimization without this explanatory cold-start SFT causes the model to overlook critical evidence and yields worse performance.

Additionally, our approach is primarily intended for review scenarios where supervision is derived from human rating signals. Under the assumption that a paper's derivations and experiments are broadly sound and follow standard academic practice, the model is particularly well suited to capturing human scientific taste and assessing academic value. Its current formulation focuses less on settings that require rigorous verification of formal derivations or detailed validation of experimental results, which we leave for future work.

\section{Case Study}
Table \ref{tab:case_study} presents a complete case study using a representative paper. It covers the full workflow from the original manuscript to the final review report, offering an end-to-end walkthrough of our evaluation pipeline.

\section{Prompts}
We release all prompt templates used in our system in Tables \ref{tab:criteria_optimization_prompt}--\ref{tab:text_consolidation_prompt} to document instruction design across evaluation stages and support reproducibility and analysis.

\begin{table*}[t]
    \renewcommand{\arraystretch}{0.82}
	\setlength{\belowcaptionskip}{-15pt} 
	\setlength{\abovecaptionskip}{0pt} 
	\centering
	\begin{mdframed}[
		linewidth=0.8pt,
		roundcorner=6pt,
		backgroundcolor=white,
		linecolor=gray!60,
		innerleftmargin=10pt,
		innerrightmargin=10pt,
		innertopmargin=5pt,
		innerbottommargin=5pt
		]
		\small
		\textbf{Case Study} \\ \\
	We take paper 68J0pJFCi3 as an example and illustrate the full process it undergoes throughout evaluation.
		
		\begin{mdframed}[
			linewidth=0.8pt,
			roundcorner=6pt,
			backgroundcolor=orange!3,
			linecolor=gray!60,
			innerleftmargin=10pt,
			innerrightmargin=10pt,
			innertopmargin=5pt,
			innerbottommargin=5pt
			]
			\small
			\textbf{Original Text}
			\begin{Verbatim}[breaklines=true, breakanywhere=true]
# ON REPRESENTING CONVEX QUADRATICALLY CON-STRAINED QUADRATIC PROGRAMS VIA GRAPH NEURAL NETWORKS

Anonymous authors Paper under double-blind review

# ABSTRACT

Convex quadratically constrained quadratic programs (QCQPs) involve finding a solution ...

# 1 INTRODUCTION

Quadratic programs (QPs) are a ...
			\end{Verbatim}
		\end{mdframed}
		
		On the one hand, we directly score this central paper to obtain an initial estimate of its quality at the node level.
		
		\begin{mdframed}[
			linewidth=0.8pt,
			roundcorner=6pt,
			backgroundcolor=orange!3,
			linecolor=gray!60,
			innerleftmargin=10pt,
			innerrightmargin=10pt,
			innertopmargin=5pt,
			innerbottommargin=5pt
			]
			\small
			\textbf{Focal Paper Scoring $\times 1$} \\ \\
			\textit{Paper (68J0pJFCi3, 2025)}
			\begin{Verbatim}[breaklines=true, breakanywhere=true]
<|SCORING_TOKEN|> This paper, \"On Representing Convex Quadratically Constrained Quadratic Programs via Graph Neural Networks,\" proposes a novel tripartite graph representation for convex QCQPs and establishes theoretical foundations regarding the expressiveness of message passing GNNs applied to this representation ...
			\end{Verbatim}
		\end{mdframed}
		
		On the other hand, we iteratively introduce additional edges and compare the paper with other papers.
		
		\begin{mdframed}[
			linewidth=0.8pt,
			roundcorner=6pt,
			backgroundcolor=orange!3,
			linecolor=gray!60,
			innerleftmargin=10pt,
			innerrightmargin=10pt,
			innertopmargin=5pt,
			innerbottommargin=5pt
			]
			\small
			\textbf{Synchronic and Diachronic Pairs Comparison $\times 2T$} \\ \\
			\textit{Paper A (68J0pJFCi3, 2025), Paper B (ULleq1Dtaw, 2024)}
			\begin{Verbatim}[breaklines=true, breakanywhere=true]
<|COMPARISON_TOKEN|> The choice of Paper A as the higher quality paper is based on a comprehensive analysis across six key criteria. While both papers present novel contributions to video-language modeling, Paper A demonstrates superior ...
			\end{Verbatim}
			\textit{Paper A (XnVttczoAV, 2025), Paper B (68J0pJFCi3, 2025)}
			\begin{Verbatim}[breaklines=true, breakanywhere=true]
 ...
			\end{Verbatim}
		\end{mdframed}
		
		We convert these special tokens into numerical values and integrate the corresponding signals via the PPR algorithm, producing a final numerical score. Meanwhile, the textual content is processed in parallel through a consolidation step.
		
		\begin{mdframed}[
			linewidth=0.8pt,
			roundcorner=6pt,
			backgroundcolor=orange!3,
			linecolor=gray!60,
			innerleftmargin=10pt,
			innerrightmargin=10pt,
			innertopmargin=5pt,
			innerbottommargin=5pt
			]
			\small
			\textbf{Final Report}
			\begin{Verbatim}[breaklines=true, breakanywhere=true]
**Ranking:** 225/500  
**Decision:** Reject  

**Summary**:  
This paper introduces a tripartite graph representation for convex Quadratically Constrained Quadratic Programs (QCQPs) and presents a theoretical framework linking this representation to the expressiveness of message passing Graph Neural Networks (GNNs). While the paper ...

**Advantages**:  
1.  **Relevant Problem Selection:** The task of representing and solving convex QCQPs ...

**Disadvantages**:  
1.  **Incremental and Poorly Justified Technical Contribution:** The proposed ...

**Questions**:
1.  The theorem states that a GNN can universally approximate the ...

**Suggestions**:
1.  **Complete and Justify the Theoretical Claims:** The authors ...
			\end{Verbatim}
		\end{mdframed}
		
		Finally, we produce a complete evaluation report with an associated score for paper 68J0pJFCi3.
		
	\end{mdframed}
	\caption{Case study. An example illustrating the complete workflow for evaluating a paper.}
	\label{tab:case_study}
\end{table*}

\begin{table*}[t]
	\centering
    \renewcommand{\arraystretch}{0.82}
	\setlength{\belowcaptionskip}{-15pt} 
	\setlength{\abovecaptionskip}{0pt} 
	\begin{mdframed}[
		linewidth=0.8pt,
		roundcorner=6pt,
		backgroundcolor=gray!6,
		linecolor=gray!60,
		innerleftmargin=10pt,
		innerrightmargin=10pt,
		innertopmargin=5pt,
		innerbottommargin=5pt
	]
		\small
		\textbf{Criteria Optimization Prompt}
		\begin{Verbatim}[breaklines=true, breakanywhere=true]
You are an expert prompt optimizer.
Your task is to optimize the {criteria} so that the language model can generate better responses.
Do not provide any information related to the output format or output requirements; analyze only the content.
The <Criteria> you provide must be structured (Use 1. 2. 3. ...), expressed clearly and accurately.
You must only return the optimized <Criteria>.
Prompt:
```
{prompt}
```
Current <Criteria> (Empty if none):
```
{criteria}
```
		\end{Verbatim}
	\end{mdframed}
	\caption{Criteria optimization prompt. Used to create a new prompt based on an existing prompt.}
	\label{tab:criteria_optimization_prompt}
\end{table*}

\begin{table*}[t]
	\centering
    \renewcommand{\arraystretch}{0.82}
	\setlength{\belowcaptionskip}{-15pt} 
	\setlength{\abovecaptionskip}{0pt} 
	\begin{mdframed}[
		linewidth=0.8pt,
		roundcorner=6pt,
		backgroundcolor=gray!6,
		linecolor=gray!60,
		innerleftmargin=10pt,
		innerrightmargin=10pt,
		innertopmargin=5pt,
		innerbottommargin=5pt
		]
		\small
		\textbf{Answer Evaluation Prompt}
		\begin{Verbatim}[breaklines=true, breakanywhere=true]
You are an expert answer evaluator.
Your task is to compare which of the two answers is of higher quality.
You must only return the character (A/B) representing the quality.
Answer A:
```
{answer_A}
```
Answer B:
```
{answer_B}
```
Better: 
		\end{Verbatim}
	\end{mdframed}
	\caption{Answer evaluation prompt. Used to evaluate the quality of answers generated by evolved criteria.}
	\label{tab:answer_evaluation_prompt}
\end{table*}

\begin{table*}[t]
	\centering
    \renewcommand{\arraystretch}{0.82}
	\setlength{\belowcaptionskip}{-25pt} 
	\setlength{\abovecaptionskip}{0pt} 
	\begin{mdframed}[
		linewidth=0.8pt,
		roundcorner=6pt,
		backgroundcolor=gray!6,
		linecolor=gray!60,
		innerleftmargin=10pt,
		innerrightmargin=10pt,
		innertopmargin=5pt,
		innerbottommargin=5pt
		]
		\small
		\textbf{Data Construction Prompt (Scoring)}
		\begin{Verbatim}[breaklines=true, breakanywhere=true]
You are an expert reviewer. 
Evaluating a paper by its quality.
Your analysis must follow the following criteria:
{criteria}
Your answer should be about 2000 words.
Do not use bold or any other symbols.
Your answer must always begin with a scored number, and there must be no other text before it.
Do not mention any issues regarding the truncation of submitted content; truncation does not constitute part of the analysis.
The correct score for this paper is {ground_truth}.
You must give this score and provide a convincing explanation for why this score is appropriate. 
		\end{Verbatim}
		\begin{Verbatim}[breaklines=true, breakanywhere=true]
Evaluate the following paper and provide a score using the following scale:
0: strong reject
1: reject, not good enough
2: marginally below the acceptance threshold
3: marginally above the acceptance threshold
4: accept, good paper
5: strong accept, should be highlighted at the conference
Here is the paper:
```
{paper_text}
```
Please provide your score first as a single number (0-5), then explain your reasoning.
Score:
		\end{Verbatim}
	\end{mdframed}
	\caption{Data construction prompt for paper scoring.}
	\label{tab:data_construction_prompt_scoring}
\end{table*}

\begin{table*}[t]
	\centering
    \renewcommand{\arraystretch}{0.82}
	\setlength{\belowcaptionskip}{-15pt} 
	\setlength{\abovecaptionskip}{0pt} 
	\begin{mdframed}[
		linewidth=0.8pt,
		roundcorner=6pt,
		backgroundcolor=gray!6,
		linecolor=gray!60,
		innerleftmargin=10pt,
		innerrightmargin=10pt,
		innertopmargin=5pt,
		innerbottommargin=5pt
		]
		\small
		\textbf{Data Construction Prompt (Comparison)}
		\begin{Verbatim}[breaklines=true, breakanywhere=true]
You are an expert reviewer. 
Analyze the following two papers separately, then indicate which one is of higher quality.
Your analysis must follow the following criteria:
{criteria}
Your answer should be about 1000 words.
Do not use bold or any other symbols.
Your answer must always begin with a choice (A or B), and there must be no other text before it.
Do not mention any issues regarding the truncation of submitted content; truncation does not constitute part of the analysis.
The correct choice is {ground_truth}.
You must output this choice and provide a convincing explanation for why this choice is appropriate.
		\end{Verbatim}
		\begin{Verbatim}[breaklines=true, breakanywhere=true]
Compare the following two papers and decide which one is better in quality.
Paper A:
```
{paper_text_a}
```
Paper B:
```
{paper_text_b}
```
Please provide your choice first as a single letter (A or B), then explain your reasoning.
Choice: 
		\end{Verbatim}
	\end{mdframed}
	\caption{Data construction prompt for pair comparison.}
	\label{tab:data_construction_prompt_comparion}
\end{table*}

\begin{table*}[t]
	\centering
    \renewcommand{\arraystretch}{0.82}
	\setlength{\belowcaptionskip}{-15pt} 
	\setlength{\abovecaptionskip}{0pt} 
	\begin{mdframed}[
		linewidth=0.8pt,
		roundcorner=6pt,
		backgroundcolor=gray!6,
		linecolor=gray!60,
		innerleftmargin=10pt,
		innerrightmargin=10pt,
		innertopmargin=5pt,
		innerbottommargin=5pt
		]
		\small
		\textbf{Text Consolidation Prompt}
		\begin{Verbatim}[breaklines=true, breakanywhere=true]
You are an expert academic reviewer and research analyst.
Your task is to produce an enhanced review of a single paper by integrating valuable comparative insights.
Instructions:
1. Use the provided `single_paper_review` as the primary foundation and preserve its core judgment unless the comparative evidence clearly justifies adjustment.
2. For each entry in `related_pairs`, briefly extract only the most relevant information from `pair_comparison`, especially comparative strengths, weaknesses, missing validations, or clearer methodological standards that are directly useful for evaluating this paper.
3. Integrate these insights naturally into the `single_paper_review`, citing the relevant literature in the merged text. Citation format: e.g. `(#0, 2025)`. Use comparisons selectively and only when they strengthen or clarify the review.
4. You must output content related to `ranking` and `decision` at first, e.g. `**Ranking:** (0/500)` and `**Decision:** Accept`. Make sure the ranking, decision, and all arguments are fully consistent with each other after revision.
5. Structure the review clearly into layered sections: first give an overall assessment, then list the most important strengths, then the most important weaknesses, and finally concrete questions/suggestions. Avoid repetition across sections.
6. The questions and suggestions proposed must all be highly practical, specific, feasible, and directly actionable for the authors to address.
7. Keep the tone professional, evidence-based, and concise. Avoid exaggerated claims or unsupported criticism.
8. Only output the merged text. Do not include any other content.
Here is all the content related to the paper:
```
{json_str}
```
The output format you need to follow:
```
**Ranking:**
**Decision:**
**Summary**:
**Advantages**:
**Disadvantages**:
**Questions**:
**Suggestions**:
```
		\end{Verbatim}
	\end{mdframed}
	\caption{Text consolidation prompt. Merge the texts to generate complete review.}
	\label{tab:text_consolidation_prompt}
\end{table*}

\begin{table*}[t]
	\centering
    \renewcommand{\arraystretch}{0.82}
	\setlength{\belowcaptionskip}{-15pt} 
	\setlength{\abovecaptionskip}{0pt} 
	\begin{mdframed}[
		linewidth=0.8pt,
		roundcorner=6pt,
		backgroundcolor=gray!6,
		linecolor=gray!60,
		innerleftmargin=10pt,
		innerrightmargin=10pt,
		innertopmargin=5pt,
		innerbottommargin=5pt
		]
		\small
		\textbf{Text Evaluation Prompt}
		\begin{Verbatim}[breaklines=true, breakanywhere=true]
You are a senior meta-reviewer evaluating the quality of peer-review reports for scientific conferences and journals.
Your task is to compare two reviews of the same paper and decide which review is stronger as a scientific review.
RESPONSE FORMAT REQUIREMENTS:
1. Return a valid JSON object only, with no extra text.
2. The JSON must contain exactly these ten keys:
- "technical_depth"
- "technical_depth_reason"
- "evidence_grounding"
- "evidence_grounding_reason"
- "scientific_rigor"
- "scientific_rigor_reason"
- "revision_utility"
- "revision_utility_reason"
- "overall_preference"
- "overall_preference_reason"
3. For each label key, the value must be exactly one of: "A", "B", or "Tie".
4. For each reason key, the value must be one brief sentence of at most 22 words.
5. Do not output anything except the JSON object.
EVALUATION DIMENSIONS:
1. technical_depth: Which review engages more deeply with the paper's technical substance, such as method details, assumptions, derivations, proofs, experiments, evaluation design, complexity, or implementation?
2. evidence_grounding: Which review ties its judgments more directly to paper-specific evidence, claims, equations, tables, figures, baselines, metrics, or clearly missing analyses?
3. scientific_rigor: Which review more rigorously evaluates validity, claim-evidence alignment, fairness of comparisons, reproducibility, completeness of argumentation, and whether the paper's conclusions are actually supported?
4. revision_utility: Which review gives more useful and actionable guidance for improving the paper, especially through concrete, acceptance-relevant revisions?
5. overall_preference: Overall, which review is more valuable for editorial decision-making and author revision, considering technical insight, evidence-based criticism, exposure of substantive weaknesses, and usefulness for improving the paper?
CORE JUDGING PRINCIPLES:
1. Judge only the quality of the reviews, not the quality of the paper.
2. Prefer reviews that identify central technical weaknesses, unsupported claims, weak evidence, missing controls, incomplete proofs, confounds, unfair baselines, or reproducibility gaps.
3. Prefer paper-specific critique over generic balance, polished wording, soft tone, or formulaic reviewing language.
4. Do not reward a review merely for sounding more diplomatic, more moderate, more balanced, or more polished.
5. Do not penalize a review merely for being critical, forceful, technically dense, or highly detailed, if its concerns are concrete and grounded in the paper.
6. Strong reviews often directly explain why the current evidence is insufficient for the paper's claims.
7. Comparative references to related work may be useful when they concretely support criticism about novelty, baselines, theory, or evaluation standards; do not dismiss them automatically unless they substantially replace paper-specific analysis.
8. Ignore superficial differences in politeness or rhetorical style unless they materially affect scientific clarity or introduce unsupported claims.
9. If one review is sharper but better exposes acceptance-relevant weaknesses, it can be better overall even if it is less smooth stylistically.
10. In close cases, prefer the review that better identifies substantive risks to validity or acceptance.
11. If the two reviews are difficult to distinguish in quality, choose "Tie" rather than defaulting to "A" due to positional bias.
12. If one review is empty, select the other review accordingly.
		\end{Verbatim}
	\color{red}\noindent\hfill\textit{Continued on next page.}
	\end{mdframed}
	\caption{Text evaluation prompt. Comparing the text quality with other approaches.}
	\label{tab:text_evaluation_prompt}
\end{table*}

\begin{table*}[t] \ContinuedFloat
	\centering
    \renewcommand{\arraystretch}{0.82}
	\setlength{\belowcaptionskip}{-15pt} 
	\setlength{\abovecaptionskip}{0pt} 
	\begin{mdframed}[
		linewidth=0.8pt,
		roundcorner=6pt,
		backgroundcolor=gray!6,
		linecolor=gray!60,
		innerleftmargin=10pt,
		innerrightmargin=10pt,
		innertopmargin=5pt,
		innerbottommargin=5pt
		]
		\small
		\textbf{Text Evaluation Prompt} \textit{(Continued)}
		\begin{Verbatim}[breaklines=true, breakanywhere=true]
Compare Review A and Review B as peer-review reports for the same paper.
Return only a valid JSON object with exactly these keys:
"technical_depth"
"technical_depth_reason"
"evidence_grounding"
"evidence_grounding_reason"
"scientific_rigor"
"scientific_rigor_reason"
"revision_utility"
"revision_utility_reason"
"overall_preference"
"overall_preference_reason"
For each label key, output exactly one of: "A", "B", or "Tie".
For each reason key, output one brief sentence of at most 22 words.
Important instructions:
1. Judge only the quality of the reviews, not the paper itself.
2. Prefer reviews that identify important technical flaws, unsupported claims, weak evidence, missing experiments, incomplete proofs, unfair comparisons, or reproducibility issues.
3. Prefer paper-specific, evidence-linked criticism over smoother wording or more diplomatically balanced tone.
4. Do not reward a review merely for sounding more polished, more measured, or more conventionally editorial.
5. A sharper or more critical review can be better if its concerns are concrete, technically meaningful, and grounded in the paper.
6. Related-work comparisons may be useful when they concretely support criticism about novelty, baselines, theory, or evaluation standards.
7. In close cases, overall_preference should favor the review that better exposes acceptance-relevant weaknesses and better helps an editor decide.
8. If the two reviews are difficult to distinguish in quality, output "Tie" rather than defaulting to "A" because of positional bias.
Review A:
{review_a}
Review B:
{review_b}
		\end{Verbatim}
	\end{mdframed}
	\caption{Text evaluation prompt (Continued). Comparing the text quality with other approaches.}
	\label{tab:text_evaluation_prompt_continue}
\end{table*}

\end{document}